\documentclass[conference]{IEEEtran}
\makeatletter
\def\ps@headings{%
\def\@oddhead{\mbox{}\scriptsize\rightmark \hfil \thepage}%
\def\@evenhead{\scriptsize\thepage \hfil \leftmark\mbox{}}%
\def\@oddfoot{}%
\def\@evenfoot{}}
\makeatother
\pagestyle{empty}

\IEEEoverridecommandlockouts
\usepackage{cite}
\usepackage{amsmath,amssymb,amsfonts, amsthm}
\usepackage[ruled, linesnumbered]{algorithm2e}
\usepackage{graphicx}
\usepackage{textcomp}
\usepackage{paralist}
\usepackage{adjustbox}
\usepackage{tabularray}

\newtheorem{myAttack}{Attack}

\newtheorem{proposition}{Proposition}
\usepackage{hyperref}

\def\BibTeX{{\rm B\kern-.05em{\sc i\kern-.025em b}\kern-.08em
    T\kern-.1667em\lower.7ex\hbox{E}\kern-.125emX}}

\usepackage{color, colortbl}
\definecolor{Gray}{gray}{0.9}
\definecolor{airforceblue}{rgb}{0.36, 0.54, 0.66}
\definecolor{aliceblue}{rgb}{0.94, 0.97, 1.0}
\definecolor{alizarin}{rgb}{0.82, 0.1, 0.26}
\definecolor{amber}{rgb}{1.0, 0.75, 0.0}
\definecolor{amber(sae/ece)}{rgb}{1.0, 0.49, 0.0}
\definecolor{bronze}{rgb}{0.8, 0.5, 0.2}
\definecolor{battleshipgrey}{rgb}{0.52, 0.52, 0.51}
\definecolor{bole}{rgb}{0.47, 0.27, 0.23}
\definecolor{bulgarianrose}{rgb}{0.28, 0.02, 0.03}
\definecolor{cadet}{rgb}{0.33, 0.41, 0.47}
\definecolor{ceil}{rgb}{0.57, 0.63, 0.81}
\definecolor{cerulean}{rgb}{0.0, 0.48, 0.65}
\definecolor{charcoal}{rgb}{0.21, 0.27, 0.31}
\definecolor{coolblack}{rgb}{0.0, 0.18, 0.39}
\definecolor{coolgrey}{rgb}{0.55, 0.57, 0.67}
\definecolor{darkcandyapplered}{rgb}{0.64, 0.0, 0.0}
\definecolor{darkbrown}{rgb}{0.4, 0.26, 0.13}
\definecolor{darkcerulean}{rgb}{0.03, 0.27, 0.49}
\definecolor{darkgray}{rgb}{0.66, 0.66, 0.66}
\definecolor{darkjunglegreen}{rgb}{0.1, 0.14, 0.13}
\definecolor{darktaupe}{rgb}{0.28, 0.24, 0.2}
\definecolor{frenchblue}{rgb}{0.0, 0.45, 0.73}
\definecolor{almond}{rgb}{0.94, 0.87, 0.8}
\definecolor{beaublue}{rgb}{0.74, 0.83, 0.9}
\definecolor{beige}{rgb}{0.96, 0.96, 0.86}
\definecolor{bisque}{rgb}{1.0, 0.89, 0.77}
\definecolor{black}{rgb}{0.0, 0.0, 0.0}
\definecolor{fluorescentorange}{rgb}{1.0, 0.75, 0.0}
\definecolor{ghostwhite}{rgb}{0.97, 0.97, 1.0}
\definecolor{antiquewhite}{rgb}{0.98, 0.92, 0.84}
\definecolor{LightCyan}{rgb}{0.88,1,1}
\newcommand{\hl}[1]{\colorbox{gray!40}{#1}}
\newcolumntype{a}{>{\columncolor{gray}}l}
\usepackage{placeins}
\hypersetup{
 hypertexnames=true, linkcolor=frenchblue, anchorcolor=black,
 citecolor=magenta, urlcolor=darkbrown  
}
\usepackage{stmaryrd}
\usepackage{breqn}
\usepackage{multicol}
\usepackage[skins]{tcolorbox}
\usepackage[nameinlink,noabbrev]{cleveref}
\newtcolorbox{myframe}[2][]{%
	enhanced,colback=white,colframe=black,coltitle=black,
	sharp corners,boxrule=0.6pt,
	fonttitle=\itshape,
	attach boxed title to top left={yshift=-0.3\baselineskip-0.4pt,xshift=2mm},
	boxed title style={tile,size=minimal,left=0.5mm,right=0.5mm,
		colback=white,before upper=\strut},
	title=#2,#1
}
\newtheorem{definition}{Definition}

\usepackage{caption}
\usepackage{subcaption}
\newcommand{\nick}{\texttt{SplitHappens}}
\begin{document}

\title{Split Happens: Combating Advanced Threats with Split Learning and Function Secret Sharing
 \thanks{This work was funded by the HARPOCRATES EU research project (No. 101069535).}
}

\author{\IEEEauthorblockN{1\textsuperscript{st} Tanveer khan}
\IEEEauthorblockA{\textit{Department of Computing Sciences} \\
\textit{Tampere University}\\
Tampere, Finland \\
tanveer.khan@tuni.fi}
\and
\IEEEauthorblockN{2\textsuperscript{nd} Mindaugas Budzys}
\IEEEauthorblockA{\textit{Department of Computing Sciences} \\
\textit{Tampere University}\\
Tampere, Finland \\
mindaugas.budzys@tuni.fi}
\and
\IEEEauthorblockN{3\textsuperscript{rd} Antonis Michalas}
\IEEEauthorblockA{\textit{Department of Computing Sciences} \\
\textit{Tampere University}\\
Tampere, Finland \\
antonios.michalas@tuni.fi}
}

\maketitle

\begin{abstract}
Split Learning (SL) -- splits a model into two distinct parts to help protect client data while enhancing Machine Learning (ML) processes. Though promising, SL has proven vulnerable to different attacks, thus raising concerns about how effective it may be in terms of data privacy. Recent works have shown promising results for securing SL through the use of a novel paradigm, named Function Secret Sharing (FSS), in which servers obtain shares of a function they compute and operate on a public input hidden with a random mask. However, these works fall short in addressing the rising number of attacks which exist on SL. In \nick, we expand the combination of FSS and SL to U-shaped SL. Similarly to other works, we are able to make use of the benefits of SL by reducing the communication and computational costs of FSS. However, a U-shaped SL provides a higher security guarantee than previous works, allowing a client to keep the labels of the training data secret, without having to share them with the server. Through this, we are able to generalize the security analysis of previous works and expand it to different attack vectors, such as modern model inversion attacks as well as label inference attacks. We tested our approach for two different convolutional neural networks on different datasets. These experiments show the effectiveness of our approach in reducing the training time as well as the communication costs when compared to simply using FSS while matching prior accuracy.
\end{abstract}

\begin{IEEEkeywords}
Function Secret Sharing, Machine Learning, Privacy, Split Learning
\end{IEEEkeywords}

\section{Introduction}
\label{sec:introduction}


The most widely used techniques for 
Privacy-preserving Machine Learning (PPML) are cryptographic techniques, such as Homomorphic Encryption (HE)~\cite{tian2022sphinx,hesamifard2017cryptodl,sav2021poseidon,khan2024wildest, khan2023learning, nguyen2024pervasive, khan2023love, khan2023more, frimpong2024guardml} and Multi-Party Computation (MPC)~\cite{wagh2019securenn,ryffel2020ariann,wagh2021falcon}. HE 
allows users to perform mathematical computations, such as addition or multiplication, on encrypted data, 
while MPC aims to create methods that enable different parties to jointly compute a function on their private data, 
hence preserving privacy. Both techniques have been shown to improve the privacy of sensitive data while retaining high model accuracy~\cite{tian2022sphinx,wagh2022pika}. However, privacy comes at a cost and these techniques pertain to considerable computation and communication costs, that render their implementation 
in real-world scenarios difficult~\cite{nguyen2023split}.  
Consequently, researchers have been working to improve these techniques by introducing novel, cost-reducing approaches.
One of these solutions is the use of Function Secret Sharing (FSS)~\cite{boyle2015function}. Through the use of FSS multiple parties can perform computations on a public input using a private, secretly shared function without identifying the data or the function itself. This approach greatly reduces communication costs when compared to traditional MPC techniques, such as Additive Secret Sharing or Garbled Circuits~\cite{ryffel2020ariann,wagh2022pika,jawalkar2023orca}. Despite these improvements, the costs of performing full FSS training remain impractical. 

Because of 
high costs, there have been recent strides in the application of 
Split Learning (SL). 
This collaborative learning 
method has been applied to cryptographic techniques to both reduce the computational and communication costs of cryptographic techniques and address the privacy leakages inherently prevalent in SL~\cite{abuadbba2020can}. Researchers have shown that using SL and HE can address the privacy issue of SL~\cite{nguyen2023split}. 
However, the main bottleneck is the high computational cost due to HE. To overcome this issue, 
many researchers use FSS instead of HE. Make Split not Hijack (MSnH)~\cite{khan2024make} was the first to combine FSS with Vanilla SL, significantly reducing computational costs compared to~\cite{nguyen2023split}. Another limitation of~\cite{nguyen2023split} is that the classification layer is executed on the server-side, inadvertently disclosing the model's final prediction.

Consequently, we aimed at applying 
a U-shaped SL to FSS, for three reasons:  \begin{inparaenum}[\it (i)] \item This type of SL is more favorable as it allows protection for both data and labels \item Launching an attack in this setting is more challenging because the available resources for the adversary  
present a variety of limitations 
compared to vanilla SL protocol 
\item Additionally, using SL with FSS, we enhance the security of SL, by reducing the privacy leakages caused by modern SL attacks, such as Pseudo-Client ATtack (PCAT)~\cite{gao2023pcat}, Feature-Oriented Reconstruction Attack (FORA)~\cite{xu2024stealthy}, and Feature Sniffer~\cite{luo2023feature} and generalize the security for any Model Inversion Attacks (MIA) and Label Inference Attacks (LIA). 
\end{inparaenum}

\noindent \textit{\textbf{Contributions. }}
The main contributions of this work are:

\begin{enumerate}
    \item We designed \textit{\nick} -- an efficient PPML protocol 
    using SL and FSS.  
    \nick{} reduces the computation ~\cite{nguyen2023split} and communication costs of FSS~\cite{ryffel2020ariann}, while enhancing the privacy 
    of SL~\cite{khan2024make}. 
    \item We have conducted experiments on three distinct datasets, namely MNIST, CIFAR and FMNIST, 
    alongside two models. Initially, we present the privacy leakage stemming from the use of SL. Subsequently, we explain how the identified privacy concern in SL 
    is mitigated through the use of FSS. 
    Additionally, we demonstrate that \nick{} 
    provides a more efficient alternative to the computationally expensive HE method discussed in prior work by Nguyen \textit{et al.}~\cite{nguyen2023split}.  Our approach enables running multiple layers on the server-side, unlike~\cite{nguyen2023split}, which only considers 
    one server-side layer. 
    \item Our findings demonstrated that our approach exhibits comparable performance in terms of communication cost and complexity, while maintaining the same level of accuracy to MSnH through all tests while providing increased security.
    \item We expand security analysis of previous works, proving that \nick{} can negate PCAT, FORA and Feature Sniffer, and its security can be generalized to other MIAs and LIAs. 
\end{enumerate}

\section{Related Work} 
\label{sec:related_Work}
Applying FSS in PPML has quickly become an active research field with a plethora of published works. AriaNN~\cite{ryffel2020ariann} is the first work that makes use of FSS for secure NN training and inference against a semi-honest adversary in a 2PC setting. AriaNN reports greatly reduced inference communication costs and inference time compared to other PPML MPC approaches, such as $ABY^3$~\cite{mohassel2018aby3} and SecureNN~\cite{wagh2019securenn}, only having higher communication costs than Falcon~\cite{wagh2021falcon} but lower computational complexity in the WAN setting.
Pika~\cite{wagh2022pika} introduces a 3PC PPML solution against either a semi-honest or a malicious adversary. The work shows lower communication costs than previous MPC works (i.e.\ $ABY^3$, Falcon), but has higher computation time than Falcon on higher batch sizes. 
Another recent FSS work is LLAMA~\cite{gupta2022llama}, which implements a low-latency library for computing non-linear mathematical functions, useful to PPML, such as sigmoid, tanh, reciprocal square root, using FSS primitives. 
Orca~\cite{jawalkar2023orca} makes use of and expands the library proposed in LLAMA~\cite{gupta2022llama} to perform secure training and inference on NN architectures with GPU acceleration.
FssNN~\cite{yang2023fssnn} proposes a two-party FSS framework for secure NN training and makes improvements to various aspects in prior arts. 
All of these works show much better performance in terms of training and inference time as well as lower communication costs when compared to previous arts in MPC. This shows that FSS can greatly improve current work in PPML. However, 
communication costs and computational complexity are still greatly increased 
comparing to plaintext alternatives. As a result, combining these works with other techniques, such as SL, could provide a substantial improvement to lowering 
said costs for NN training.

SL provides significant advantages in terms of splitting the computational cost of training NNs for a user by outsourcing a part of the training process to a server. Initially, it was believed as a promising approach in terms of client raw data protection, as 
parties exclusively share activation maps ($ATm$) 
between them. 
However, a potential vulnerability arises during end-to-end training, where the exchange of gradients at the cut layer could inadvertently encode private features or labels, posing a risk to data privacy. Abuadbba \textit{et al.}~\cite{abuadbba2020can} proposed using SL when training and classifying 1-dimensional data on CNN models. 
They identified that SL achieves comparable accuracy to centralized models, but noted that SL by itself has multiple privacy leakages 
hampering the confidentiality of the input data. Moreover, researchers have found that SL is susceptible to Feature Space Hijacking Attack (FSHA)~\cite{pasquini2021unleashing}, LIA~\cite{luo2023feature}, MIA~\cite{erdougan2022unsplit,gao2023pcat,xu2024stealthy} 
and Visual Invertibility (VI)~\cite{abuadbba2020can}.

The leakages highlight the importance of addressing data privacy concerns when implementing SL techniques. As such, researchers have attempted combining different Privacy-preserving Techniques (PPTs) with SL to address these attacks. For example, Abuadbba \textit{et al.}~\cite{abuadbba2020can} tried two privacy mitigation techniques--adding more hidden layers on the client-side and using Differential Privacy (DP). However, both 
these techniques suffer from a loss of model accuracy,
particularly when DP is used. Some other works like
Split Ways~\cite{khan2023split} or Split without a Leak~\cite{nguyen2023split} make use of HE to encrypt the $ATm$ of SL before outsourcing them to the server. 
This way, the model retains equivalent accuracy when compared to plaintext SL, and addresses some of the privacy leakages known in SL. 

In all of the covered FSS works, the researchers employ FSS on the entire model, which in turn increases computational complexity and communication overhead. To the best of our knowledge, 
only one work combines SL with FSS in the literature~\cite{khan2024make}. However, it has a limitation: the server can access labels and model output, compromising user data privacy.  
Our approach improves privacy in ML applications, solve the privacy leakage caused by SL and reduce both the communication and computation costs of FSS.

\section{Preliminaries}
\label{sec:prelim}



\textbf{Split Learning:} SL is a collaborative learning technique~\cite{gupta2018distributed} in which an ML model is split into two parts. The client part ($f_{\theta_{C}}$), comprises the first few layers ($1, \ldots, l$) of the model, with $l$ being the last layer on the client-side. The server part ($f_{\theta_{P}}$), encompasses the remaining layers of the model ($l+1, \ldots , L$), with $L$ being the last layer on the server-side. The client and server collaborate to train the split model without having access to each other's parts.

The aim of SL is to protect client privacy by allowing clients to train part of the model, and share $ATm$ (instead of 
raw data) with a server running the remaining part of the model. It is also utilized to reduce the client's computational overhead by merely running a few layers rather than the entire model. In the local model, there is no split, while in the vanilla SL there is a split with labels being shared with the server. In contrast, the U-shaped SL model also has a split though the final layer
is executed on the client-side, resulting in no sharing of labels with the server. Initially, it was believed that SL is a promising approach in terms of client raw data protection, as 
parties exclusively share $ATm$ 
between them. 
However, studies have shown the possibility of privacy leakage in SL~\cite{abuadbba2020can,vepakomma2019reducing}.

\textbf{Function Secret Sharing:} FSS provides a way for additively secret-sharing a function $f$ from a given function family $F$. A two-party FSS scheme splits a function $f: {(0,1)}^{n} \rightarrow \textit{G}$, for some abelian group $G$, into functions described by keys such that $f = \mathsf{f_{0}} + \mathsf{f}_{1}$ and every strict subset of the keys hides $f$. Formally an FSS scheme can be defined as~\cite{boyle2015function}:
\begin{definition}[Function Secret Sharing]
    Let $\mathsf{FSS}$ be a 2-party Function Secret Sharing scheme, that 
    for some class $F$ has a pair of Probabilistic Polynomial-Time (PPT) algorithms $\mathsf{FSS} = (\mathsf{KeyGen, EvalAll})$ such that:

\begin{itemize}
	\item $(\mathsf{f_{0}, f_{1}}) \leftarrow \mathsf{KeyGen(1}^{\lambda}, f)$: 
	The $\mathsf{KeyGen}$ algorithm is a probabilistic algorithm that takes as input the security parameter $\lambda$ and some function $f$ 
    and outputs two different function shares $(\mathsf{f_{0}, f_{1}})$ (also called function keys).
	\item $\mathsf{EvalAll}(j, \mathsf{f}_{j}, x_{pub}) \rightarrow \mathsf{f}_{j}(x_{pub})$: The $\mathsf{EvalAll}$ algorithm is a deterministic algorithm that takes three parameters, the input bit $j \in \{0,1\}$, the function key $\mathsf{f}_j$, and the public data $x_{pub}$ and outputs the shares $\mathsf{f}_{j}(x_{pub})$. 
\end{itemize}

\noindent An FSS scheme should satisfy the following two properties:

\begin{itemize}
	\item \underline{Secrecy}: A single key $(\mathsf{f}_{j})$ hides the original function $f$. 
	\item \underline{Correctness}: Adding the local shares gets the same output as the original function:
    \begin{equation}
        \mathsf{Pr[f_0}+\mathsf{f_1} = f | (\mathsf{f}_0, \mathsf{f}_1) \leftarrow \mathsf{KeyGen}(1^\lambda, f)] = 1
    \end{equation}
\end{itemize}
\end{definition}

\begin{table*}
\scriptsize
    \centering
    \caption{Parameters and description in the algorithms}
    \resizebox{\textwidth}{!}{%
    \label{table:paranddes}
    \begin{tabular}{l|l|l|c|l|l}
        \hline
        \rowcolor{gray}
        \color{white}\textbf{\#}	& \color{white}\textbf{ML Parameters} & \color{white}\textbf{Description} &  & \color{white}\textbf{FSS Parameters} & \color{white}\textbf{Description}\\ 
        \hline
        1	& $\mathbf{D}$, $m$  & Dataset, Number of data samples & & $s_{j}$ & Random Seed \\ \hline
        2	& $x, y$ & Input data samples, Ground-truth labels & &  $\alpha$  & Random mask\\ \hline
        3	& $\eta$, $p$ & Learning rate, Momentum &  &  $P_{0}$, $P_{1}$ &  Server~1 \ and \ 2\\ \hline
	4	& $\Delta$ & Backpropogation gradients &  & $\mathsf{f_0, f_1}$ & Function shares (function keys) \\ \hline
        5	& $n$, $x_{pub}$ & Batch size, Public input  & & $f_{\theta_{P}}$  & Server-side model \\ \hline
	6	& $N$ & Number of batches to be trained & & $f_{\theta_{C}}$   & Client-side model \\ \hline
        7	& $E$ & Number of training epochs &  &   $\tilde{f}$, $\tilde{f}^{-1}$ & Encoder, Decoder \\ \hline
        8	& $f^{i}$ & Linear or non-linear operation of layer $i$ & & $\mathsf{f^{FC}_{j}}$ &  FC part for server $j$ \\ \hline
        9	& $ATm^{i}$ & Output activation map of $f^{i}$& & $\mathsf{f^{ReLU}_{j}}$  & ReLU FSS key for server $j$  \\ \hline
    \end{tabular}
    }
\end{table*}

\begin{figure}[t]
	\centering
\tikzset{every picture/.style={line width=0.65pt}} 
\begin{adjustbox}{width=0.4\textwidth}

\tikzset{every picture/.style={line width=0.75pt}} 

\begin{tikzpicture}[x=0.75pt,y=0.75pt,yscale=-1,xscale=1]

\draw  [fill={rgb, 255:red, 255; green, 244; blue, 199 }  ,fill opacity=1 ] (205,312.62) -- (217.9,299.72) -- (248,299.72) -- (248,421.82) -- (235.1,434.72) -- (205,434.72) -- cycle ; \draw   (248,299.72) -- (235.1,312.62) -- (205,312.62) ; \draw   (235.1,312.62) -- (235.1,434.72) ;
\draw  [fill={rgb, 255:red, 200; green, 218; blue, 164 }  ,fill opacity=1 ] (235,322.32) -- (247.6,309.72) -- (277,309.72) -- (277,412.22) -- (264.4,424.82) -- (235,424.82) -- cycle ; \draw   (277,309.72) -- (264.4,322.32) -- (235,322.32) ; \draw   (264.4,322.32) -- (264.4,424.82) ;
\draw  [fill={rgb, 255:red, 218; green, 246; blue, 242 }  ,fill opacity=1 ] (263.2,323.52) -- (277,309.72) -- (309.2,309.72) -- (309.2,411.92) -- (295.4,425.72) -- (263.2,425.72) -- cycle ; \draw   (309.2,309.72) -- (295.4,323.52) -- (263.2,323.52) ; \draw   (295.4,323.52) -- (295.4,425.72) ;
\draw  [fill={rgb, 255:red, 197; green, 181; blue, 175 }  ,fill opacity=1 ] (142,310.62) -- (154.9,297.72) -- (185,297.72) -- (185,440.82) -- (172.1,453.72) -- (142,453.72) -- cycle ; \draw   (185,297.72) -- (172.1,310.62) -- (142,310.62) ; \draw   (172.1,310.62) -- (172.1,453.72) ;
\draw  [fill={rgb, 255:red, 255; green, 244; blue, 199 }  ,fill opacity=1 ] (295.4,310.62) -- (308.3,297.72) -- (338.4,297.72) -- (338.4,418.82) -- (325.5,431.72) -- (295.4,431.72) -- cycle ; \draw   (338.4,297.72) -- (325.5,310.62) -- (295.4,310.62) ; \draw   (325.5,310.62) -- (325.5,431.72) ;
\draw  [fill={rgb, 255:red, 200; green, 218; blue, 164 }  ,fill opacity=1 ] (325,322.32) -- (337.6,309.72) -- (367,309.72) -- (367,412.12) -- (354.4,424.72) -- (325,424.72) -- cycle ; \draw   (367,309.72) -- (354.4,322.32) -- (325,322.32) ; \draw   (354.4,322.32) -- (354.4,424.72) ;
\draw  [fill={rgb, 255:red, 218; green, 246; blue, 242 }  ,fill opacity=1 ] (354.4,320.7) -- (365.38,309.72) -- (391,309.72) -- (391,413.74) -- (380.02,424.72) -- (354.4,424.72) -- cycle ; \draw   (391,309.72) -- (380.02,320.7) -- (354.4,320.7) ; \draw   (380.02,320.7) -- (380.02,424.72) ;
\draw  [fill={rgb, 255:red, 246; green, 200; blue, 185 }  ,fill opacity=1 ] (348.6,451.62) -- (361.5,438.72) -- (391.6,438.72) -- (391.6,523.9) -- (378.7,536.8) -- (348.6,536.8) -- cycle ; \draw   (391.6,438.72) -- (378.7,451.62) -- (348.6,451.62) ; \draw   (378.7,451.62) -- (378.7,536.8) ;
\draw [line width=1.5]    (185,363.72) -- (201,363.72) ;
\draw [shift={(205,363.72)}, rotate = 180] [fill={rgb, 255:red, 0; green, 0; blue, 0 }  ][line width=0.08]  [draw opacity=0] (11.61,-5.58) -- (0,0) -- (11.61,5.58) -- cycle    ;
\draw [line width=1.5]    (398,380) -- (437,380) ;
\draw [shift={(441,380)}, rotate = 180] [fill={rgb, 255:red, 0; green, 0; blue, 0 }  ][line width=0.08]  [draw opacity=0] (11.61,-5.58) -- (0,0) -- (11.61,5.58) -- cycle    ;
\draw [line width=1.5]    (514,493.66) -- (401,493.02) ;
\draw [shift={(397,493)}, rotate = 0] [fill={rgb, 255:red, 0; green, 0; blue, 0 }  ][line width=0.08]  [draw opacity=0] (11.61,-5.58) -- (0,0) -- (11.61,5.58) -- cycle    ;
\draw [line width=1.5]    (512.44,472) -- (513,493.66) ;
\draw  [fill={rgb, 255:red, 195; green, 220; blue, 252 }  ,fill opacity=1 ] (451,284.32) -- (460.6,274.72) -- (483,274.72) -- (483,333.4) -- (473.4,343) -- (451,343) -- cycle ; \draw   (483,274.72) -- (473.4,284.32) -- (451,284.32) ; \draw   (473.4,284.32) -- (473.4,343) ;
\draw  [fill={rgb, 255:red, 195; green, 220; blue, 252 }  ,fill opacity=1 ] (544,286.42) -- (554.2,276.22) -- (578,276.22) -- (578,332.8) -- (567.8,343) -- (544,343) -- cycle ; \draw   (578,276.22) -- (567.8,286.42) -- (544,286.42) ; \draw   (567.8,286.42) -- (567.8,343) ;
\draw  [fill={rgb, 255:red, 195; green, 220; blue, 252 }  ,fill opacity=1 ] (452,406.6) -- (461.6,397) -- (484,397) -- (484,456.12) -- (474.4,465.72) -- (452,465.72) -- cycle ; \draw   (484,397) -- (474.4,406.6) -- (452,406.6) ; \draw   (474.4,406.6) -- (474.4,465.72) ;
\draw  [fill={rgb, 255:red, 195; green, 220; blue, 252 }  ,fill opacity=1 ] (543,406.6) -- (552.6,397) -- (575,397) -- (575,455.62) -- (565.4,465.22) -- (543,465.22) -- cycle ; \draw   (575,397) -- (565.4,406.6) -- (543,406.6) ; \draw   (565.4,406.6) -- (565.4,465.22) ;
\draw    (484.5,432) -- (493,432) ;
\draw [shift={(496,432)}, rotate = 180] [fill={rgb, 255:red, 0; green, 0; blue, 0 }  ][line width=0.08]  [draw opacity=0] (8.93,-4.29) -- (0,0) -- (8.93,4.29) -- cycle    ;
\draw    (483,313) -- (493,313) ;
\draw [shift={(496,313)}, rotate = 180] [fill={rgb, 255:red, 0; green, 0; blue, 0 }  ][line width=0.08]  [draw opacity=0] (8.93,-4.29) -- (0,0) -- (8.93,4.29) -- cycle    ;
\draw  [dash pattern={on 0.84pt off 2.51pt}] (445,270) -- (582,270) -- (582,350) -- (445,350) -- cycle ;
\draw  [dash pattern={on 0.84pt off 2.51pt}] (447,393) -- (578,393) -- (578,472) -- (447,472) -- cycle ;
\draw  [dash pattern={on 0.84pt off 2.51pt}] (138,289) -- (397,289) -- (397,545) -- (138,545) -- cycle ;
\draw [line width=1.5]    (599.02,519) -- (401,518.02) ;
\draw [shift={(397,518)}, rotate = 0] [fill={rgb, 255:red, 0; green, 0; blue, 0 }  ][line width=0.08]  [draw opacity=0] (11.61,-5.58) -- (0,0) -- (11.61,5.58) -- cycle    ;
\draw [line width=1.5]    (597.02,318) -- (598.02,519) ;
\draw [line width=1.5]    (579,318) -- (598.02,318) ;
\draw  [fill={rgb, 255:red, 246; green, 200; blue, 185}  ,fill opacity=1 ] (495,286.2) -- (505.2,276) -- (529,276) -- (529,332.8) -- (518.8,343) -- (495,343) -- cycle ; \draw   (529,276) -- (518.8,286.2) -- (495,286.2) ; \draw   (518.8,286.2) -- (518.8,343) ;
\draw  [fill={rgb, 255:red, 246; green, 200; blue, 185}  ,fill opacity=1 ] (495,408.2) -- (505.2,398) -- (529,398) -- (529,454.8) -- (518.8,465) -- (495,465) -- cycle ; \draw   (529,398) -- (518.8,408.2) -- (495,408.2) ; \draw   (518.8,408.2) -- (518.8,465) ;
\draw    (530,313) -- (540,313) ;
\draw [shift={(543,313)}, rotate = 180] [fill={rgb, 255:red, 0; green, 0; blue, 0 }  ][line width=0.08]  [draw opacity=0] (8.93,-4.29) -- (0,0) -- (8.93,4.29) -- cycle    ;
\draw    (529,432) -- (539,432) ;
\draw [shift={(542,432)}, rotate = 180] [fill={rgb, 255:red, 0; green, 0; blue, 0 }  ][line width=0.08]  [draw opacity=0] (8.93,-4.29) -- (0,0) -- (8.93,4.29) -- cycle    ;

\draw (210,418) node [anchor=north west][inner sep=0.75pt]  [rotate=90] [align=left] {Convolutional};

\draw (303,418) node [anchor=north west][inner sep=0.75pt]  [rotate=90] [align=left] {Convolutional};

\draw (240.42,415) node [anchor=north west][inner sep=0.75pt]  [rotate=90] [align=left] {Max-pooling};

\draw (333.42,415) node [anchor=north west][inner sep=0.75pt]  [rotate=90] [align=left] {Max-pooling};

\draw (272.42,390) node [anchor=north west][inner sep=0.75pt]  [rotate=90] [align=left] {ReLU};

\draw (360.42,390) node [anchor=north west][inner sep=0.75pt]  [rotate=90] [align=left] {ReLU};

\draw (164,476.72) node [anchor=north west][inner sep=0.75pt]   [align=left] {\textbf{Client}};

\draw (355.42,512.55) node [anchor=north west][inner sep=0.75pt]  [rotate=90] [align=left] {ReLU};

\draw (149,425) node [anchor=north west][inner sep=0.75pt]  [rotate=90] [align=left] {Training Data};

\draw (453,325) node [anchor=north west][inner sep=0.75pt]  [rotate=90]  {$\mathsf{f^{FC}_{0}}$};

\draw (547,325) node [anchor=north west][inner sep=0.75pt]  [rotate=90]  {$\mathsf{f^{FC}_{0}}$};

\draw (454,445) node [anchor=north west][inner sep=0.75pt]  [rotate=90]  {$\mathsf{f^{FC}_{1}}$};

\draw (545,445) node [anchor=north west][inner sep=0.75pt]  [rotate=90]  {$\mathsf{f^{FC}_{1}}$};

\draw (498,326) node [anchor=north west][inner sep=0.75pt]  [rotate=90]  {$\mathsf{f^{ReLU}_{0}}$};

\draw (498,448) node [anchor=north west][inner sep=0.75pt]  [rotate=90]  {$\mathsf{f^{ReLU}_{1}}$};

\draw (400,360) node [anchor=north west][inner sep=0.75pt]    {$x_{pub}$};

\draw (485,365) node [anchor=north west][inner sep=0.75pt]    {$\mathsf{EvalAll}$};

\draw (485,250) node [anchor=north west][inner sep=0.75pt]   [align=left] {\textbf{Server 1}};

\draw (525,475) node [anchor=north west][inner sep=0.75pt]   [align=left] {\textbf{Server 2}};

\draw (575,498) node [anchor=north west][inner sep=0.75pt]    {$\mathbf{\hat{y}}_{0}$};

\draw (419,466) node [anchor=north west][inner sep=0.75pt]    {$\mathbf{\hat{y}}_{1}$};

\draw (410,500) node [anchor=north west][inner sep=0.75pt]  {$+$};

\end{tikzpicture}
\end{adjustbox}
	\caption{\nick: Private U-shaped SL}
	\label{fig: ushapedsl}
\end{figure}

\section{FSS based U-shaped SL Protocol}
\label{sec:methodology}

This section presents the \nick{} protocol's training algorithms and system model.

\subsection{Actors in the model}
\label{subsec:slActors}

\begin{itemize}
    \item \textbf{Client}: The client provides the training data and is capable of computing the initial layer of the model in plaintext. The client uses their private data to compute the initial layers 
    and shares $ATm$ and the remaining layers with the servers. 
    In our setup, the client also computes the final prediction of the model \textit{independently} and does \textit{not share the truth labels to the server.}
    \item \textbf{Servers}: Our protocol requires two servers to perform computations on the secret shares. 
    Based on the received $x_{pub}$, each server makes computations on the underlying ML model. Since we employ the U-shaped SL protocol, 
    the servers are unable to compute the model's final output 
    as the servers do not receive the truth labels. In our approach, the final output of the model is computed on the client-side. 
    We operate under the assumption that there is no collusion between the two servers and that the servers are hosted independently to avoid reconstructing the original data from the shares. This assumption is realistic and consistent with other 2PC works~\cite{mohassel2017secureml,ryffel2020ariann}.
\end{itemize}

\begin{figure*}
    \centering
    \begin{adjustbox}{width=0.8\textwidth}
    \tikzset{every picture/.style={line width=0.75pt}} 

\begin{tikzpicture}[x=0.75pt,y=0.75pt,yscale=-1,xscale=1]

\draw  [color={rgb, 255:red, 210; green, 82; blue, 70 }  ,draw opacity=1 ][fill={rgb, 255:red, 210; green, 82; blue, 70 }  ,fill opacity=0.3 ][line width=0.75]  (293.57,126.14) -- (311.57,126.14) -- (311.57,164.14) -- (293.57,164.14) -- cycle ;
\draw  [fill={rgb, 255:red, 255; green, 255; blue, 255 }  ,fill opacity=1 ] (307.29,153.29) .. controls (307.29,150.37) and (304.92,148) .. (302,148) .. controls (299.08,148) and (296.71,150.37) .. (296.71,153.29) .. controls (296.71,156.2) and (299.08,158.57) .. (302,158.57) .. controls (304.92,158.57) and (307.29,156.2) .. (307.29,153.29) -- cycle ;
\draw  [fill={rgb, 255:red, 255; green, 255; blue, 255 }  ,fill opacity=1 ] (307.29,136.86) .. controls (307.29,133.86) and (304.86,131.43) .. (301.86,131.43) .. controls (298.86,131.43) and (296.43,133.86) .. (296.43,136.86) .. controls (296.43,139.86) and (298.86,142.29) .. (301.86,142.29) .. controls (304.86,142.29) and (307.29,139.86) .. (307.29,136.86) -- cycle ;

\draw  [fill={rgb, 255:red, 255; green, 244; blue, 199}  ,fill opacity=0.52 ] (498.34,104.32) -- (513.97,104.32) -- (513.97,141.36) -- (498.34,141.36) -- cycle ;
\draw  [fill={rgb, 255:red, 255; green, 244; blue, 199}  ,fill opacity=0.52 ] (501.53,107.51) -- (517.16,107.51) -- (517.16,144.55) -- (501.53,144.55) -- cycle ;
\draw  [fill={rgb, 255:red, 255; green, 244; blue, 199}  ,fill opacity=0.52 ] (505.03,110.43) -- (520.66,110.43) -- (520.66,147.47) -- (505.03,147.47) -- cycle ;
\draw  [fill={rgb, 255:red, 255; green, 244; blue, 199}  ,fill opacity=0.52 ] (508.22,113.62) -- (523.85,113.62) -- (523.85,150.66) -- (508.22,150.66) -- cycle ;
\draw  [fill={rgb, 255:red, 255; green, 244; blue, 199}  ,fill opacity=0.52 ] (511.41,116.81) -- (527.04,116.81) -- (527.04,153.84) -- (511.41,153.84) -- cycle ;

\draw (52.52,60.31) node  {\includegraphics[width=12.38pt,height=19.04pt]{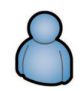}};
\draw (53.31,214.11) node  {\includegraphics[width=12.38pt,height=19.04pt]{figures/user.png}};
\draw    (63.34,64) -- (115,64) ;
\draw [shift={(117,64)}, rotate = 180] [fill={rgb, 255:red, 0; green, 0; blue, 0 }  ][line width=0.08]  [draw opacity=0] (8.93,-4.29) -- (0,0) -- (8.93,4.29) -- cycle    ;
\draw    (63.34,218) -- (115,218) ;
\draw [shift={(117,218.1)}, rotate = 180] [fill={rgb, 255:red, 0; green, 0; blue, 0 }  ][line width=0.08]  [draw opacity=0] (8.93,-4.29) -- (0,0) -- (8.93,4.29) -- cycle    ;
\draw (173,68.11) node  {\includegraphics[width=85pt,height=47pt]{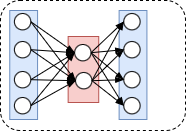}};
\draw  [fill={rgb, 255:red, 203; green, 221; blue, 243 }  ,fill opacity=1 ][dash pattern={on 0.84pt off 2.51pt}] (65.91,105.73) .. controls (65.91,103.23) and (67.94,101.2) .. (70.44,101.2) -- (168.48,101.2) .. controls (170.98,101.2) and (173.01,103.23) .. (173.01,105.73) -- (173.01,113.82) .. controls (173.01,116.32) and (170.98,118.35) .. (168.48,118.35) -- (70.44,118.35) .. controls (67.94,118.35) and (65.91,116.32) .. (65.91,113.82) -- cycle ;
\draw (81,110.11) circle [radius=6.72];
\draw  [fill={rgb, 255:red, 203; green, 221; blue, 243 }  ,fill opacity=1 ][dash pattern={on 0.84pt off 2.51pt}] (360.08,86.65) .. controls (360.08,83.81) and (362.38,81.5) .. (365.23,81.5) -- (460.85,81.5) .. controls (463.69,81.5) and (466,83.81) .. (466,86.65) -- (466,95.85) .. controls (466,98.69) and (463.69,101) .. (460.85,101) -- (365.23,101) .. controls (362.38,101) and (360.08,98.69) .. (360.08,95.85) -- cycle ;
\draw (375.5,90.83) circle [radius=6.72];
\draw   (328,68) -- (339.01,68) -- (339.01,91.33) ;
\draw    (339.01,91.33) -- (358.41,90.8) ;
\draw [shift={(361.41,90.72)}, rotate = 178.44] [fill={rgb, 255:red, 0; green, 0; blue, 0 }  ][line width=0.08]  [draw opacity=0] (8.93,-4.29) -- (0,0) -- (8.93,4.29) -- cycle    ;
\draw  [dash pattern={on 4.5pt off 4.5pt}]  (461,171.5) -- (272,170.02) ;
\draw [shift={(269,170)}, rotate = 0.45] [fill={rgb, 255:red, 0; green, 0; blue, 0 }  ][line width=0.08]  [draw opacity=0] (8.93,-4.29) -- (0,0) -- (8.93,4.29) -- cycle    ;
\draw  [dash pattern={on 4.5pt off 4.5pt}]  (461,171.5) -- (460.84,136.16) -- (460.78,122.36) ;
\draw (173,216.4) node  {\includegraphics[width=85pt,height=47pt]{figures/server.png}};
\draw  [fill={rgb, 255:red, 203; green, 221; blue, 243 }  ,fill opacity=1 ][dash pattern={on 0.84pt off 2.51pt}] (41.23,133.97) .. controls (41.23,130.95) and (43.68,128.5) .. (46.7,128.5) -- (119.87,128.5) .. controls (122.9,128.5) and (125.34,130.95) .. (125.34,133.97) -- (125.34,143.73) .. controls (125.34,146.75) and (122.9,149.2) .. (119.87,149.2) -- (46.7,149.2) .. controls (43.68,149.2) and (41.23,146.75) .. (41.23,143.73) -- cycle ;
\draw  [fill={rgb, 255:red, 255; green, 198; blue, 198 }  ,fill opacity=1 ][dash pattern={on 0.84pt off 2.51pt}] (41.59,159.68) .. controls (41.59,156.75) and (43.96,154.38) .. (46.89,154.38) -- (119.38,154.38) .. controls (122.31,154.38) and (124.68,156.75) .. (124.68,159.68) -- (124.68,169.13) .. controls (124.68,172.05) and (122.31,174.42) .. (119.38,174.42) -- (46.89,174.42) .. controls (43.96,174.42) and (41.59,172.05) .. (41.59,169.13) -- cycle ;
\draw    (346.92,108.2) -- (431,108.49) ;
\draw [shift={(434,108.5)}, rotate = 180.2] [fill={rgb, 255:red, 0; green, 0; blue, 0 }  ][line width=0.08]  [draw opacity=0] (8.93,-4.29) -- (0,0) -- (8.93,4.29) -- cycle    ;
\draw    (228.5,216.2) -- (255,216.5) ;
\draw    (228.5,71.2) -- (255,71.2) ;
\draw  [color={rgb, 255:red, 218; green, 240; blue, 195 }  ,draw opacity=1 ][fill={rgb, 255:red, 218; green, 240; blue, 195 }  ,fill opacity=1 ] (215.33,132.39) .. controls (215.33,130.17) and (217.14,128.37) .. (219.36,128.37) -- (243.97,128.37) .. controls (246.2,128.37) and (248,130.17) .. (248,132.39) -- (248,144.48) .. controls (248,146.7) and (246.2,148.51) .. (243.97,148.51) -- (219.36,148.51) .. controls (217.14,148.51) and (215.33,146.7) .. (215.33,144.48) -- cycle ;
\draw [line width=0.75]  [dash pattern={on 4.5pt off 4.5pt}]  (225.42,148.74) -- (225.56,176.76) ;
\draw  [dash pattern={on 4.5pt off 4.5pt}]  (225.22,100.97) -- (225.23,112.67) -- (225.55,127.4) ;
\draw [shift={(225.21,97.97)}, rotate = 89.91] [fill={rgb, 255:red, 0; green, 0; blue, 0 }  ][line width=0.08]  [draw opacity=0] (3.57,-1.72) -- (0,0) -- (3.57,1.72) -- cycle    ;
\draw  [dash pattern={on 4.5pt off 4.5pt}]  (239.67,172.73) -- (240.04,148.29) ;
\draw [shift={(239.63,175.72)}, rotate = 270.86] [fill={rgb, 255:red, 0; green, 0; blue, 0 }  ][line width=0.08]  [draw opacity=0] (3.57,-1.72) -- (0,0) -- (3.57,1.72) -- cycle    ;
\draw [line width=0.75]  [dash pattern={on 3.75pt off 3pt on 7.5pt off 1.5pt}]  (240.04,98.49) -- (240.04,127.51) ;
\draw  [fill={rgb, 255:red, 203; green, 221; blue, 243 }  ,fill opacity=1 ][dash pattern={on 0.84pt off 2.51pt}] (328.91,147.13) .. controls (328.91,144.2) and (331.28,141.83) .. (334.2,141.83) -- (433.38,141.83) .. controls (436.31,141.83) and (438.68,144.2) .. (438.68,147.13) -- (438.68,156.57) .. controls (438.68,159.5) and (436.31,161.87) .. (433.38,161.87) -- (334.2,161.87) .. controls (331.28,161.87) and (328.91,159.5) .. (328.91,156.57) -- cycle ;
\draw (341.5,152) circle [radius=6.72];
\draw    (255,143.85) -- (286.96,143.85) ;
\draw [shift={(289.96,143.85)}, rotate = 180] [fill={rgb, 255:red, 0; green, 0; blue, 0 }  ][line width=0.08]  [draw opacity=0] (8.93,-4.29) -- (0,0) -- (8.93,4.29) -- cycle    ;
\draw    (255,71.2) -- (255,216.5) ;
\draw   (290,81.5) .. controls (290,56.65) and (310.15,36.5) .. (335,36.5) -- (570,36.5) .. controls (594.85,36.5) and (615,56.65) .. (615,81.5) -- (615,216.5) .. controls (615,241.35) and (594.85,261.5) .. (570,261.5) -- (335,261.5) .. controls (310.15,261.5) and (290,241.35) .. (290,216.5) -- cycle ;
\draw  [fill={rgb, 255:red, 200; green, 218; blue, 164  }  ,fill opacity=0.52 ] (529.26,109.3) -- (540.53,109.3) -- (540.53,135.5) -- (529.26,135.5) -- cycle ;
\draw  [fill={rgb, 255:red, 200; green, 218; blue, 164  }  ,fill opacity=0.52 ] (531.84,111.97) -- (543.11,111.97) -- (543.11,138.17) -- (531.84,138.17) -- cycle ;
\draw  [fill={rgb, 255:red, 200; green, 218; blue, 164  }  ,fill opacity=0.52 ] (534.64,115.16) -- (545.91,115.16) -- (545.91,141.36) -- (534.64,141.36) -- cycle ;
\draw  [fill={rgb, 255:red, 200; green, 218; blue, 164  }  ,fill opacity=0.52 ] (537.22,117.83) -- (548.49,117.83) -- (548.49,144.03) -- (537.22,144.03) -- cycle ;
\draw  [fill={rgb, 255:red, 200; green, 218; blue, 164  }  ,fill opacity=0.52 ] (539.8,120.5) -- (551.07,120.5) -- (551.07,146.7) -- (539.8,146.7) -- cycle ;

\draw  [dash pattern={on 0.84pt off 2.51pt}] (490.09,110.24) .. controls (490.09,102.7) and (496.2,96.59) .. (503.74,96.59) -- (596.08,96.59) .. controls (603.62,96.59) and (609.73,102.7) .. (609.73,110.24) -- (609.73,151.18) .. controls (609.73,158.71) and (603.62,164.82) .. (596.08,164.82) -- (503.74,164.82) .. controls (496.2,164.82) and (490.09,158.71) .. (490.09,151.18) -- cycle ;
\draw  [fill={rgb, 255:red, 255; green, 244; blue, 199}  ,fill opacity=0.52 ] (552.98,105.87) -- (568.61,105.87) -- (568.61,142.91) -- (552.98,142.91) -- cycle ;
\draw  [fill={rgb, 255:red, 255; green, 244; blue, 199}  ,fill opacity=0.52 ] (556.16,109.05) -- (571.8,109.05) -- (571.8,146.09) -- (556.16,146.09) -- cycle ;
\draw  [fill={rgb, 255:red, 255; green, 244; blue, 199}  ,fill opacity=0.52 ] (559.67,111.98) -- (575.3,111.98) -- (575.3,149.02) -- (559.67,149.02) -- cycle ;
\draw  [fill={rgb, 255:red, 255; green, 244; blue, 199}  ,fill opacity=0.52 ] (562.85,115.17) -- (578.49,115.17) -- (578.49,152.2) -- (562.85,152.2) -- cycle ;
\draw  [fill={rgb, 255:red, 255; green, 244; blue, 199}  ,fill opacity=0.52 ] (566.04,118.35) -- (581.67,118.35) -- (581.67,155.39) -- (566.04,155.39) -- cycle ;

\draw  [fill={rgb, 255:red, 200; green, 218; blue, 164  }  ,fill opacity=0.52 ] (583.89,110.84) -- (595.17,110.84) -- (595.17,137.05) -- (583.89,137.05) -- cycle ;
\draw  [fill={rgb, 255:red, 200; green, 218; blue, 164  }  ,fill opacity=0.52 ] (586.47,113.51) -- (597.75,113.51) -- (597.75,139.71) -- (586.47,139.71) -- cycle ;
\draw  [fill={rgb, 255:red, 200; green, 218; blue, 164  }  ,fill opacity=0.52 ] (589.28,116.7) -- (600.55,116.7) -- (600.55,142.91) -- (589.28,142.91) -- cycle ;
\draw  [fill={rgb, 255:red, 200; green, 218; blue, 164  }  ,fill opacity=0.52 ] (591.85,119.37) -- (603.13,119.37) -- (603.13,145.58) -- (591.85,145.58) -- cycle ;
\draw  [fill={rgb, 255:red, 200; green, 218; blue, 164  }  ,fill opacity=0.52 ] (594.43,122.04) -- (605.71,122.04) -- (605.71,148.24) -- (594.43,148.24) -- cycle ;

\draw  [fill={rgb, 255:red, 255; green, 244; blue, 199}  ,fill opacity=0.63 ][dash pattern={on 0.84pt off 2.51pt}] (512.32,180.64) .. controls (512.32,177.62) and (514.77,175.17) .. (517.79,175.17) -- (590.96,175.17) .. controls (593.98,175.17) and (596.43,177.62) .. (596.43,180.64) -- (596.43,190.4) .. controls (596.43,193.42) and (593.98,195.87) .. (590.96,195.87) -- (517.79,195.87) .. controls (514.77,195.87) and (512.32,193.42) .. (512.32,190.4) -- cycle ;
\draw  [fill={rgb, 255:red, 200; green, 218; blue, 164  }  ,fill opacity=0.35 ][dash pattern={on 0.84pt off 2.51pt}] (512.68,206.34) .. controls (512.68,203.42) and (515.05,201.05) .. (517.97,201.05) -- (590.47,201.05) .. controls (593.39,201.05) and (595.77,203.42) .. (595.77,206.34) -- (595.77,215.79) .. controls (595.77,218.72) and (593.39,221.09) .. (590.47,221.09) -- (517.97,221.09) .. controls (515.05,221.09) and (512.68,218.72) .. (512.68,215.79) -- cycle ;
\draw  [dash pattern={on 4.5pt off 4.5pt}]  (269.8,239.2) -- (271.04,82.16) -- (270.6,51.6) ;
\draw    (615.67,129.71) -- (646.67,129.71) ;
\draw [shift={(649.67,129.71)}, rotate = 180] [fill={rgb, 255:red, 0; green, 0; blue, 0 }  ][line width=0.08]  [draw opacity=0] (8.93,-4.29) -- (0,0) -- (8.93,4.29) -- cycle    ;
\draw    (650.5,19.75) -- (650,279.95) ;
\draw    (650,279.95) -- (49.89,279.42) ;
\draw    (49.89,279.42) -- (50.11,249.78) ;
\draw    (650,20.08) -- (51,20.63) ;
\draw    (51,20.63) -- (51.22,41.67) ;
\draw  [dash pattern={on 4.5pt off 4.5pt}]  (637.4,30.6) -- (63.2,31.09) ;
\draw  [dash pattern={on 4.5pt off 4.5pt}]  (63.2,32.09) -- (63.41,51.27) ;
\draw  [dash pattern={on 4.5pt off 4.5pt}]  (637.4,30.6) -- (636.2,270.4) ;
\draw  [dash pattern={on 4.5pt off 4.5pt}]  (636.2,270.4) -- (70.2,270.18) ;
\draw  [dash pattern={on 4.5pt off 4.5pt}]  (70.2,270.18) -- (70.21,240.47) ;
\draw  [dash pattern={on 4.5pt off 4.5pt}]  (619.22,150.38) -- (636.8,150.5) ;
\draw [shift={(616.22,150.36)}, rotate = 0.4] [fill={rgb, 255:red, 0; green, 0; blue, 0 }  ][line width=0.08]  [draw opacity=0] (8.93,-4.29) -- (0,0) -- (8.93,4.29) -- cycle    ;
\draw  [dash pattern={on 4.5pt off 4.5pt}]  (63.41,51.27) -- (121.13,51.26) ;
\draw  [dash pattern={on 4.5pt off 4.5pt}]  (70.21,240.47) -- (111,240.8) ;
\draw  [dash pattern={on 4.2pt off 3pt}]  (233,241.2) -- (269.8,241.2) ;
\draw [shift={(229,240.87)}, rotate = 0.47] [fill={rgb, 255:red, 0; green, 0; blue, 0 }  ][line width=0.08]  [draw opacity=0] (8.93,-4.29) -- (0,0) -- (8.93,4.29) -- cycle    ;
\draw  [dash pattern={on 5pt off 4pt}]  (233,51.6) -- (268.2,51.6) ;
\draw [shift={(229,51.27)}, rotate = 0.47] [fill={rgb, 255:red, 0; green, 0; blue, 0 }  ][line width=0.08]  [draw opacity=0] (8.93,-4.29) -- (0,0) -- (8.93,4.29) -- cycle    ;
\draw  [fill={rgb, 255:red, 203; green, 221; blue, 243 }  ,fill opacity=1 ][dash pattern={on 0.84pt off 2.51pt}] (492.91,72.4) .. controls (492.91,69.9) and (494.94,67.87) .. (497.44,67.87) -- (595.48,67.87) .. controls (597.98,67.87) and (600.01,69.9) .. (600.01,72.4) -- (600.01,80.48) .. controls (600.01,82.99) and (597.98,85.01) .. (595.48,85.01) -- (497.44,85.01) .. controls (494.94,85.01) and (492.91,82.99) .. (492.91,80.48) -- cycle ;
\draw    (496.74,91.59) -- (544,91.75) -- (609,91.99) ;
\draw [shift={(612,92)}, rotate = 180.21] [fill={rgb, 255:red, 0; green, 0; blue, 0 }  ][line width=0.08]  [draw opacity=0] (8.93,-4.29) -- (0,0) -- (8.93,4.29) -- cycle    ;
\draw (508.5,76.5) circle [radius=6.72];
\draw    (474.16,61.31) .. controls (475.82,62.98) and (475.81,64.65) .. (474.14,66.31) .. controls (472.47,67.97) and (472.46,69.64) .. (474.12,71.31) .. controls (475.78,72.98) and (475.77,74.65) .. (474.1,76.31) .. controls (472.43,77.97) and (472.42,79.64) .. (474.08,81.31) .. controls (475.73,82.98) and (475.72,84.65) .. (474.05,86.31) .. controls (472.38,87.97) and (472.37,89.64) .. (474.03,91.31) .. controls (475.69,92.98) and (475.68,94.65) .. (474.01,96.31) .. controls (472.34,97.97) and (472.33,99.64) .. (473.99,101.31) .. controls (475.65,102.98) and (475.64,104.65) .. (473.97,106.31) .. controls (472.3,107.97) and (472.29,109.64) .. (473.95,111.31) .. controls (475.61,112.98) and (475.6,114.65) .. (473.93,116.31) .. controls (472.26,117.97) and (472.25,119.64) .. (473.91,121.31) .. controls (475.56,122.98) and (475.55,124.65) .. (473.88,126.31) .. controls (472.21,127.97) and (472.2,129.64) .. (473.86,131.31) .. controls (475.52,132.98) and (475.51,134.65) .. (473.84,136.31) .. controls (472.17,137.97) and (472.16,139.64) .. (473.82,141.31) .. controls (475.48,142.98) and (475.47,144.65) .. (473.8,146.31) .. controls (472.13,147.97) and (472.12,149.64) .. (473.78,151.31) .. controls (475.44,152.98) and (475.43,154.65) .. (473.76,156.31) .. controls (472.09,157.97) and (472.08,159.64) .. (473.74,161.31) .. controls (475.4,162.98) and (475.39,164.65) .. (473.72,166.31) .. controls (472.05,167.97) and (472.04,169.64) .. (473.69,171.31) .. controls (475.35,172.98) and (475.34,174.65) .. (473.67,176.31) .. controls (472,177.97) and (471.99,179.64) .. (473.65,181.31) .. controls (475.31,182.98) and (475.3,184.65) .. (473.63,186.31) .. controls (471.96,187.97) and (471.95,189.64) .. (473.61,191.31) .. controls (475.27,192.98) and (475.26,194.65) .. (473.59,196.31) .. controls (471.92,197.97) and (471.91,199.64) .. (473.57,201.31) .. controls (475.23,202.98) and (475.22,204.65) .. (473.55,206.31) .. controls (471.88,207.97) and (471.87,209.64) .. (473.52,211.31) .. controls (475.18,212.98) and (475.17,214.65) .. (473.5,216.31) .. controls (471.83,217.97) and (471.82,219.64) .. (473.48,221.31) .. controls (475.14,222.98) and (475.13,224.65) .. (473.46,226.31) .. controls (471.79,227.97) and (471.78,229.64) .. (473.44,231.31) .. controls (475.1,232.98) and (475.09,234.65) .. (473.42,236.31) -- (473.4,240.8) -- (473.4,240.8) ;
\draw    (478.16,61.11) .. controls (479.82,62.78) and (479.81,64.45) .. (478.14,66.11) .. controls (476.47,67.77) and (476.46,69.44) .. (478.12,71.11) .. controls (479.78,72.78) and (479.77,74.45) .. (478.1,76.11) .. controls (476.43,77.77) and (476.42,79.44) .. (478.08,81.11) .. controls (479.73,82.78) and (479.72,84.45) .. (478.05,86.11) .. controls (476.38,87.77) and (476.37,89.44) .. (478.03,91.11) .. controls (479.69,92.78) and (479.68,94.45) .. (478.01,96.11) .. controls (476.34,97.77) and (476.33,99.44) .. (477.99,101.11) .. controls (479.65,102.78) and (479.64,104.45) .. (477.97,106.11) .. controls (476.3,107.77) and (476.29,109.44) .. (477.95,111.11) .. controls (479.61,112.78) and (479.6,114.45) .. (477.93,116.11) .. controls (476.26,117.77) and (476.25,119.44) .. (477.91,121.11) .. controls (479.56,122.78) and (479.55,124.45) .. (477.88,126.11) .. controls (476.21,127.77) and (476.2,129.44) .. (477.86,131.11) .. controls (479.52,132.78) and (479.51,134.45) .. (477.84,136.11) .. controls (476.17,137.77) and (476.16,139.44) .. (477.82,141.11) .. controls (479.48,142.78) and (479.47,144.45) .. (477.8,146.11) .. controls (476.13,147.77) and (476.12,149.44) .. (477.78,151.11) .. controls (479.44,152.78) and (479.43,154.45) .. (477.76,156.11) .. controls (476.09,157.77) and (476.08,159.44) .. (477.74,161.11) .. controls (479.4,162.78) and (479.39,164.45) .. (477.72,166.11) .. controls (476.05,167.77) and (476.04,169.44) .. (477.69,171.11) .. controls (479.35,172.78) and (479.34,174.45) .. (477.67,176.11) .. controls (476,177.77) and (475.99,179.44) .. (477.65,181.11) .. controls (479.31,182.78) and (479.3,184.45) .. (477.63,186.11) .. controls (475.96,187.77) and (475.95,189.44) .. (477.61,191.11) .. controls (479.27,192.78) and (479.26,194.45) .. (477.59,196.11) .. controls (475.92,197.77) and (475.91,199.44) .. (477.57,201.11) .. controls (479.23,202.78) and (479.22,204.45) .. (477.55,206.11) .. controls (475.88,207.77) and (475.87,209.44) .. (477.52,211.11) .. controls (479.18,212.78) and (479.17,214.45) .. (477.5,216.11) .. controls (475.83,217.77) and (475.82,219.44) .. (477.48,221.11) .. controls (479.14,222.78) and (479.13,224.45) .. (477.46,226.11) .. controls (475.79,227.77) and (475.78,229.44) .. (477.44,231.11) .. controls (479.1,232.78) and (479.09,234.45) .. (477.42,236.11) -- (477.4,240.6) -- (477.4,240.6) ;

\draw (65,70) node [anchor=north west][inner sep=0.75pt]  []  {${\textstyle \langle x_{pub} \rangle _{0}}$};

\draw (44.27,73) node [anchor=north west][inner sep=0.75pt]    {$P_{0}$};

\draw (45.06,226.8) node [anchor=north west][inner sep=0.75pt]    {$P_{1}$};

\draw (65,200) node [anchor=north west][inner sep=0.75pt]  []  {${\textstyle \langle x_{pub} \rangle _{1}}$};

\draw (203,80) node [anchor=north west][inner sep=0.75pt]    {${\displaystyle \langle w \rangle _{0}}$};

\draw (203,190) node [anchor=north west][inner sep=0.75pt]    {${\displaystyle \langle w \rangle _{1}}$};

\draw (301.54,55.27) node [anchor=north west][inner sep=0.75pt]    {${\displaystyle \langle y\rangle }$};

\draw (301.54,80.33) node [anchor=north west][inner sep=0.75pt]    {${\textstyle \langle \hat{y} \rangle }$};

\draw (443.38,104.33) node [anchor=north west][inner sep=0.75pt]    {$\mathbf{J}$};

\draw (295.21,171.17) node [anchor=north west][inner sep=0.75pt]    {${\displaystyle \langle \hat{y} \rangle \ =\ s( a)}$};

\draw (309.88,110.33) node [anchor=north west][inner sep=0.75pt]    {${\textstyle w^{n+1}\leftarrow w-\eta \nabla _{w}\mathcal{L})}$};

\draw (300.71,219.33) node [anchor=north west][inner sep=0.75pt]    {${\textstyle \nabla _{w}\mathcal{L} \ =\ \left[\frac{\partial \mathcal{L}}{\partial w}\right]}$};

\draw (299.21,197) node [anchor=north west][inner sep=0.75pt]    {$J\ \leftarrow \mathcal{L}( \langle \hat{y} \rangle \ -\ \langle y\rangle )$};

\draw (130,132) node [anchor=north west][inner sep=0.75pt]    {$z=w*x+b$};

\draw (130,157) node [anchor=north west][inner sep=0.75pt]    {${\displaystyle a=ReLU( z)}$};

\draw (44.58,131.52) node [anchor=north west][inner sep=0.75pt]   [align=left] {{ Linear Layer}};

\draw (46.76,158.01) node [anchor=north west][inner sep=0.75pt]   [align=left] {{ ReLU Layer}};

\draw (90.19,103.69) node [anchor=north west][inner sep=0.75pt]   [align=left] {{ Forward Prop.}};

\draw (72.5,104) node [anchor=north west][inner sep=0.75pt]   [align=left] {{ 1}};

\draw (383,85.67) node [anchor=north west][inner sep=0.75pt]   [align=left] {{  Compute Cost}};

\draw (367,86) node [anchor=north west][inner sep=0.75pt]   [align=left] {{ 2}};

\draw (216.59,132.01) node [anchor=north west][inner sep=0.75pt]   [align=left] {\textit{{ 2PC}}};

\draw (345.17,146) node [anchor=north west][inner sep=0.75pt]   [align=left] {{ Backward Prop.}};

\draw (333,146.5) node [anchor=north west][inner sep=0.75pt]   [align=left] {{ 3}};

\draw (523.67,180.19) node [anchor=north west][inner sep=0.75pt]   [align=left] {{ Conv Layer}};

\draw (526.84,206.68) node [anchor=north west][inner sep=0.75pt]   [align=left] {{ Max Layer}};

\draw (513.19,70) node [anchor=north west][inner sep=0.75pt]   [align=left] {{ Forward Prop.}};

\draw (500,72) node [anchor=north west][inner sep=0.75pt]   [align=left] {{ 1}};
\end{tikzpicture}
 \end{adjustbox}
    \caption{Utilizing function secret sharing between two servers for \nick}
    \label{fig:serverside_FSS}
   
\end{figure*}

\subsection{Training the \nick{} protocol}
\label{subsec: protocol u-shapedsl}

We have used~\autoref{alg:clientusl} and~\autoref{alg:serveruspl} to train \nick. 
It consists of: 

\noindent \textit{Initialization phase.} \enskip
The initialization phase occurs only once, whereas the other phases continue until the model iterates through all epochs. This phase consists of socket 
and random weight initialization $\Phi$. 
The client establishes a socket connection and synchronizes the hyperparameters $\eta, n, N, E$. $ATm$ and the gradient are initially set to zero $\emptyset$. These parameters must be synchronized on both sides to be trained in the same way. 
In \nick{}, the client executes the first few layers as well as the last layer of the model, 
while the remaining layers are executed on the server-side. 

\noindent \textit{Forward propagation.} \enskip
In \autoref{alg:clientusl}, the client carries out forward propagation on the input $x$ using $f_{\theta_{C}}$ to calculate $ATm$. To prevent 
client data privacy, 
the private input 
($ATm$) is first masked using random mask $\alpha$ to construct the public input $x_{pub}$ before sending it to the server. Once $x_{pub}$ is constructed, the next step is to send $x_{pub}$ to the server. 
Once $x_{pub}$ is computed, the client sends it to the servers to continue the forward propagation. As can be seen in \autoref{fig:serverside_FSS}, we have two servers labeled as $P_{0}$ and $P_{1}$. Each server receives the public input $x_{pub}$ and initiates forward propagation by executing their respective portion of the model. On the server-side, we employ FSS, 
to compute the function keys. Specifically, these keys are generated for the ReLU layers. We employ the FSS $\mathsf{KeyGen}$ algorithm, to create these keys for each server. 
Additionally, to compute the FC layers 
on the server-side, we used beaver triples. 
As depicted in \autoref{alg:serveruspl}, each server executes their share of the layer on $x_{pub}$ until the $L-1^{th}$ layers. After executing the $L-1^{th}$ layer, each server sends the output to the client. Upon reception, the client processes the final output layer 
to get the predicted output $\hat{y}$. The next step involves calculating the loss, and this is computed using the equation $J \leftarrow \mathcal{L} (\hat{y}_{j}, y_{j})$. In U-shaped SL, the loss calculation involves multiple steps. \textit{First, the model's output is represented in fixed precision. This output represents the actual values without any secret sharing, as it remains on the client-side. Next, the target values are encrypted into additive shares. These encrypted shares are then used to compute the loss function, which also remains in the form of additive shares.} These loss shares are sent back to the servers for further computation.

The model's privacy 
is preserved by using additive shares for both the target values and the loss. 
The target values remain encrypted and distributed across the servers, ensuring that no individual server can access the original target values. Similarly, the loss remains encrypted throughout the computation, preventing any potential privacy leaks.

\noindent \textit{Backward propagation.} \enskip
After calculating the loss the client starts the backward propagation by computing ${\frac{\partial J}{\partial \hat{\mathbf{y}}_{j}}}$ and ${\frac{\partial J}{\partial ATm_{j}^{(L)} } }$ using the chain rule and sends the values to the server. Upon reception the servers first compute ${\frac{\partial J}{\partial ATm^{L-1}_{j}}}$ and continue the backward propagation until layer $l+1$ (see~\autoref{alg:serveruspl}). Both the servers update the weights ($\boldsymbol{w}$) and biases ($\boldsymbol{b}$) of their respective layers using 
$\boldsymbol{w} =  \boldsymbol{w} - \eta\frac{\partial J}{\partial \boldsymbol{w}}$ and $ \boldsymbol{b} = \boldsymbol{b} - \eta\frac{\partial J}{\partial \boldsymbol{b}}$.
After updating $\boldsymbol{w}$ and $\boldsymbol{b}$ of the layer $l+1$, the server calculates ${\frac{\partial J}{\partial ATm^{l+1}_{j}}}$ and sends it to the client. Upon reception, the client calculates the gradient of $J$ with respect to $\boldsymbol{w}$ and $\boldsymbol{b}$ of the layers residing on the client-side and updates $\boldsymbol{w}$ and $\boldsymbol{b}$. The forward and backward propagation continue until the model converges to learn a suitable set of parameters.


\begin{algorithm}[!hbt]
\scriptsize
 $soc\leftarrow$ socket initialized with port and address\;
 $\eta, n, N, E \leftarrow soc.\mathsf{synchronize}$\\
 $ \{\boldsymbol{w}^{( i)}, \boldsymbol{b}^{( i)}\}_{\forall i\in \{0..l\}} \ \leftarrow$  initialize using $\Phi $\\
 $
 \{ATm^{( i)}\}_{\forall i\in \{0..l\}}\leftarrow \emptyset \ $, 
 $
 \left\{\frac{\partial J}{\partial ATm^{( i)}}\right\}_{\forall i\in \{0..l\}}\leftarrow \emptyset \ $\\
 \For{$\displaystyle e \ \in \ E $}{
 	\For{$\displaystyle \text{each} \ \text{batch}\ ( x,\ y) \ \text{generated\ from}\ D\ $}{
 	$\displaystyle  \mathbf{Forward\ propagation:}$ \\
  	\For{$i \leftarrow 1$ to $l$}{$\displaystyle$
 	$\displaystyle ATm^{( i)} \ \leftarrow \ f_{\theta_{C}}\left( x_{pub}^{( i)}\right)$\\}
$\displaystyle  x_{pub}^{i} \ \leftarrow \ ATm^{( i)} + \alpha$\\
 	$\displaystyle   soc.\mathsf{send}\ ( x_{pub}^{i})$\\
 	$\displaystyle   soc.\mathsf{receive}\ ( ATm^{L-1}_{j})$\\
  $\displaystyle   \text{Compute}\ ( ATm^{L-1})$\\
  $\displaystyle   \hat{y} \ \leftarrow \ s^{L}\left(ATm^{( L-1)}\right)$\\

 	$\displaystyle  J \leftarrow \mathcal{L} (\hat{\mathbf{y}}, \mathbf{y_{j}})$\\

 	$\displaystyle \mathbf{Backward\ propagation:}$\\
	$\displaystyle \text{Compute}\left\{\frac{\partial J}{\partial \hat{\mathbf{y}}_{j}} \& \frac{\partial J}{\partial ATm_{j}^{(L)} } \right\}$\\
	$\displaystyle  soc.\mathsf{send}\ \left( \frac{\partial J}{\partial ATm_{j}^{(L)}}, \frac{\partial J}{\partial \mathbf{w}_{j}^{(L)}} \right)$\\
 
 	$\displaystyle  soc.\mathsf{receive}\ \left( \frac{\partial J}{\partial ATm_{j}^{(l+1)}} \right)$\\
   	$\displaystyle \text{Compute}\left\{\frac{\partial J}{\partial ATm^{l}}\right\}$\\
 	\For{$i\leftarrow l$ to $1$}{$\text{Compute}\ \left\{ \frac{\partial J}{\partial \boldsymbol{w}^{( i)}}, \ \frac{\partial J}{\partial \boldsymbol{b}^{( i)}} \right\}$ and
 	$\displaystyle\  \text{Update}\ \boldsymbol{w}^{( i)},\ \boldsymbol{b}^{( i)}$
 	}
 	}
 }
 \caption{\textbf{Client-Side}}
 \label{alg:clientusl}
\end{algorithm}	
\begin{algorithm}[!hbt]
\scriptsize
 $soc\leftarrow$ socket initialized with port and address\;
 $\eta, n, N, E \leftarrow soc.\mathsf{synchronize}$\\

 \For{$\displaystyle e \ \in \ E $}{
 	\For{$\displaystyle i \leftarrow l+1 \ \mathbf{to} \ L - 1 \ $}{
 	$\displaystyle \mathbf{Forward\ propagation:}$\\
  
 	$\displaystyle soc.\mathsf{receive}\ (x_{pub}^{i}) \ \ $ \\
  \For{$\displaystyle j= 0, 1  $}{
$\displaystyle  ATm_{j}^{L-1} \ \leftarrow \ f_{\theta_{Pj}}(x_{pub})$}
$\displaystyle soc.\mathsf{send} \ \left( ATm^{L-1}_{j}\right)$\\
  
$\displaystyle \mathbf{Backward\ propagation:}$\\
	$\displaystyle  soc.\mathsf{receive}\ \left( \frac{\partial J}{\partial ATm_{j}^{(L)}}, \frac{\partial J}{\partial \mathbf{w}_{j}^{(L)}} \right)$\\
 \For{$\displaystyle j= 0, 1  $}{	
  $\displaystyle \text{Compute}\left\{\frac{\partial J}{\partial ATm^{L-1}_{j}}\right\}$\\
\For{$i\leftarrow L-1$ to $l+1$}{$\text{Compute}\ \left\{ \frac{\partial J}{\partial \boldsymbol{w}_{j}^{( i)}}, \ \frac{\partial J}{\partial \boldsymbol{b}_{j}^{( i)}} \right\}$ and
 	$\displaystyle\ \text{Update}\ \boldsymbol{w}_{j}^{( i)},\ \boldsymbol{b}^{( i)}$}
$\displaystyle soc.\mathsf{send} \ \left( \frac{\partial J}{\partial ATm_{j}^{(l+1)}}\right)$\\
 		}}
 }
 \caption{\textbf{Server-Side}}
 \label{alg:serveruspl}
\normalsize
\end{algorithm}

\section{Threat Model \& Security Analysis}
\label{sec:threat_model_and_sec}



\textbf{Threat model:} In SL, collaborative learning of local models introduces potential risk due to interactions among the participants~\cite{li2022ressfl}.
In this study, we examine a scenario involving a semi-honest adversary, denoted as $\mathcal{ADV}$, capable of corrupting one of the two servers engaged in our protocol. In this context, the corrupted parties adhere to the protocol's specifications, all while attempting to gather as much information as possible about the
inputs and function shares of other participants. Within our specific framework, secure protocols are employed to facilitate collaborative CNN model training by three parties: a client and two servers. The client, who also serves as the data owner (note that considering a malicious client in this context is pointless), initiates the process by executing the initial layers of the model. This action produces an intermediary result, denoted as $ATm$, which is subsequently kept confidential through FSS 
between two independent servers. Following this, these two servers proceed to jointly train the remaining segments of the model using the client's data, achieved by employing the FSS protocol.

As such, behavior of $\mathcal{ADV}$ can be summarized as: 

\begin{itemize}
    \item $\mathcal{ADV}$ does not possess information about the participating
clients, 
except necessary details required to execute the SL protocol. 
\item $\mathcal{ADV}$ is familiar with a \textit{shadow} dataset ($\tilde{x}$), 
from the same domain as the clients' training set $x$. In this case no element of $\tilde{x}$ overlap with $x$, i.e. $\tilde{x} \cap x = \emptyset$.
\item $\mathcal{ADV}$ has no knowledge of the architecture and weights of the whole model. However, 
she can gather information and make assumptions of the client model and train an architecturally similar model $\tilde{f_{\theta_C}}$ using the $\tilde{x}$.
\end{itemize}


\begin{myAttack}[Label Inference Attack] Let $\mathcal{ADV}$ represents an adversary acting as one of the servers. $\mathcal{ADV}$ successfully performs a LIA if she manages to infer the private labels 
by exploiting the client's output layer; if the client computes only the output layer locally and shares it with the server, the private labels can be inferred.
\end{myAttack}

\begin{myAttack}[Model Inversion Attack] Let $\mathcal{ADV}$ represents an adversary acting as one of the servers. $\mathcal{ADV}$ successfully performs MIA 
if she manages to infer input samples using client's model output. $\mathcal{ADV}$ use these inferred input samples to train a new model that is functionally similar to $f_{\theta_{c}}$.
\end{myAttack}

\subsection{Security analysis}
\label{sec: security-analysis}

With our threat model established, we can now move forward to demonstrate the security of our protocol.

\begin{proposition}[LIA Soundness]
\label{proposition:lia-soundness}
   Let $\mathcal{ADV}$ be a semi-honest adversary that corrupts at most one of the two servers ($P_0$ or $P_1$) involved in the protocol. Then $\mathcal{ADV}$ cannot launch a successful Label Inference attack.
\end{proposition}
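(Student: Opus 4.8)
The plan is to reduce the impossibility of a successful Label Inference attack to the secrecy property of the underlying FSS scheme (from the Function Secret Sharing definition) together with the structural fact that, in the U-shaped topology, the truth labels $y$ never leave the client. First I would set up the adversary's view: when $\mathcal{ADV}$ corrupts a single server $P_j$, its entire transcript over the whole training run consists of (i) the public masked activations $x_{pub} = ATm + \alpha$, (ii) its own FSS function key $\mathsf{f}_j$ for the ReLU layers and its Beaver-triple shares for the FC layers, (iii) the additive shares $\langle y \rangle_j$ of the encrypted targets and the loss shares $\langle J \rangle_j$ it receives during backward propagation, and (iv) the backward messages $\tfrac{\partial J}{\partial ATm_j^{(L)}}$, $\tfrac{\partial J}{\partial ATm_j^{(l+1)}}$, all of which are themselves additive shares. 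The key observation is that every label-dependent quantity reaching $P_j$ is a single additive share of a value whose complementary share is held only by the non-colluding server $P_{1-j}$ (or by the client), hence is uniformly distributed and independent of $y$.

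Next I would make the simulation argument explicit. I construct a PPT simulator $\mathcal{S}$ that, given only the public parameters and the corrupted server's index $j$ (and, if one wants a stronger statement, the shadow dataset $\tilde{x}$ the adversary is entitled to under the threat model), produces a transcript computationally indistinguishable from $\mathcal{ADV}$'s real view: $\mathcal{S}$ samples the FSS key $\mathsf{f}_j$ by invoking $\mathsf{KeyGen}$ on an arbitrary function in the class and handing back only the $j$-th key — indistinguishability here is exactly FSS secrecy; it samples $x_{pub}$ uniformly at random, which is distributed identically to the real masked activations because $\alpha$ is a fresh uniform mask unknown to a single server; and it samples every label/loss/gradient share uniformly at random, which matches the real distribution because the complementary share is never seen by $P_j$. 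Composing these, the simulated and real views are indistinguishable, so anything $\mathcal{ADV}$ can compute about $y$ from the real view it could already compute from public information alone; since $y$ is information-theoretically hidden from that public information (it is correlated only with the client-local output layer), $\mathcal{ADV}$ cannot recover the labels with probability non-negligibly better than guessing. I would also note the Attack~1 hypothesis — that a LIA works when "the client computes only the output layer locally and shares it with the server" — is precisely the configuration our protocol avoids: in $\nick$ the client keeps $\hat y$ and only releases additive shares of $J$, so the premise of the attack is never met.

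The main obstacle I anticipate is not the FSS secrecy step, which is a black-box invocation of the definition, but arguing rigorously that the backward-pass messages $\tfrac{\partial J}{\partial ATm_j^{(L)}}$, $\tfrac{\partial J}{\partial \mathbf{w}_j^{(L)}}$ and $\tfrac{\partial J}{\partial ATm_j^{(l+1)}}$ genuinely carry no label information usable by a lone server — one must check that each is transmitted in secret-shared form (so that $P_j$'s share is uniform given $P_{1-j}$'s), rather than in the clear, and that the Beaver-triple–based FC evaluation and the FSS ReLU evaluation do not, over many epochs and correlated batches, leak a joint function of the masks and the labels. I would handle this by arguing per-round that each opened/exchanged value is either masked by a fresh Beaver randomness or is an independent additive share, then invoking a standard hybrid argument across the $E \cdot N$ rounds to accumulate at most a negligible distinguishing advantage. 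A secondary, more modeling-level point worth addressing is the no-collusion assumption on $P_0, P_1$ stated among the actors: it is load-bearing here, since two colluding servers could reconstruct $\langle y \rangle$, and I would state explicitly that the proposition is conditional on that assumption.
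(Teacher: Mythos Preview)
Your proposal is sound but takes a genuinely different route from the paper's own proof. The paper does \emph{not} run a simulation argument or invoke the FSS secrecy property; instead it targets one concrete LIA mechanism --- gradient-magnitude comparison in the style of Feature Sniffer --- and argues via two ad hoc games that, because the client only ever exposes a masked gradient $\Delta_{pub}=\Delta+\alpha$ with $\alpha$ large and unknown, the adversary cannot meaningfully compare $\Delta_{pub}$ against the gradients $\tilde{\Delta}$ of her shadow model $\tilde{f_{\theta_C}}$: since $\alpha\gg\tilde{\Delta}$, she will always see $\Delta_{pub}>\tilde{\Delta}$ regardless of the true label, so the comparison carries no signal. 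Your simulator-based reduction is more general and more in line with standard cryptographic practice: it would rule out \emph{any} label-inference strategy (not just gradient comparison), and it makes explicit where the no-collusion assumption and the per-round freshness of the masks/shares are consumed, points the paper's proof leaves implicit. Conversely, the paper's argument is lighter and speaks directly to the specific LIA family cited in the threat model, at the cost of not obviously covering attackers who exploit the view in ways other than magnitude comparison. If you want to align with the paper, you would replace your indistinguishability argument with the two-game comparison sketch; if you prefer to keep your approach, it strictly subsumes the paper's conclusion.
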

\begin{proof}
Let $\mathcal{ADV}$ be a semi-honest adversary that has compromised $P_0$ or $P_1$. In addition to that, assume $\mathcal{ADV}$ gains access to:
\begin{inparaenum}[i)]
    \item \textit{The public input $x_{pub} = ATm + \alpha$,}
    \item \textit{One part of $f_{\theta_{P_{b}}}$, where $b$ is either~0 or~1,}
    \item \textit{An architecturally similar model $\tilde{f_{\theta_C}}$ to the client's model\footnote{In an optimal case for LIA, the user has a single layer model, however, this is not a realistic assumption in all applications and we assume the clients model can contain an arbitrary number of layers, which can greatly reduce the accuracy of LIA~\cite{luo2023feature}.} $f_{\theta_C}$ that produces gradients $\tilde{\Delta}$,}
    \item \textit{The masked gradients $\Delta_{pub} = \Delta + \alpha$, where $\Delta$ is the back propagation gradients and $\alpha$ is a sufficiently large random mask generated in FSS.}
\end{inparaenum}
Then, if $\mathcal{ADV}$ cannot train an architecturally similar model $\tilde{f_{\theta_C}}$ that can produce similar gradients to 
$f_{\theta_C}$, 
our protocol is secure against LIA.
Following our assumptions, let's assume two games. 

\noindent \textbf{Game 0:} In this first game, 
both the  challenger ($\mathcal{CHAL}$) and $\mathcal{ADV}$ will generate their respective gradient values $\Delta$ and $\tilde{\Delta}$ from their models. $\mathcal{CHAL}$ will then generate $\alpha$ and add it to $\Delta$ prior to sending $\Delta_{pub}$ back to $\mathcal{ADV}$. $\mathcal{ADV}$ will then compare two values and choose the higher one. This game can be then simplified to a functionally similar game (\textbf{Game 1}). 

\noindent \textbf{Game 1:} The core steps of the game are the same as in \textbf{Game 0}. In this case, instead of generating $\tilde{\Delta}$ and $\Delta$ from their respective models $f_{\theta_C}$, $\mathcal{ADV}$ and $\mathcal{CHAL}$ will pick a random number $x$ and $x'$ between~0 and~1, respectively. Now assuming $\alpha$ is a sufficiently large random mask, which is much higher than $x'$, the resulting $x'_{pub}$ will always be larger than $x$. And since $\mathcal{ADV}$ cannot obtain the exact value of $\alpha$ applied to the generated number $x'$, due to this she will not be able to accurately compare $x'_{pub}$ and $x$. 

The same applies in \textbf{Game 0}, as $\mathcal{ADV}$ will not be able to compare $\Delta_{pub}$ and $\tilde{\Delta}$, then $\mathcal{ADV}$ will always assume that the gradients in $\Delta_{pub}$ point to the true label in every instance, as $\Delta_{pub} > \tilde{\Delta}$.

Thus, $\tilde{f_{\theta_C}}$ does not produce similar gradients to $f_{\theta_C}$, ensuring our protocol is security against LIA.
\end{proof}

\begin{proposition}[MIA Soundness]
\label{proposition:mia-soundness}
   Let $\mathcal{ADV}$ be a semi-honest adversary that corrupts at most one of two servers ($P_0$ or $P_1$) involved in the protocol. Then $\mathcal{ADV}$ cannot launch a successful MIA.
\end{proposition}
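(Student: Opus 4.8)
The plan is to mirror the game-based structure used in the proof of \autoref{proposition:lia-soundness}. First I would fix the view of the corrupted server: a semi-honest $\mathcal{ADV}$ controlling $P_b$ ($b \in \{0,1\}$) sees only the masked public input $x_{pub} = ATm + \alpha$, a single FSS key $\mathsf{f}_b$ (hence one share $f_{\theta_{P_b}}$ of the server-side model), a shadow dataset $\tilde{x}$ with $\tilde{x} \cap x = \emptyset$, and an architecturally similar client model $\tilde{f}_{\theta_C}$. The decisive structural point, which I would state up front, is that \nick{} is U-shaped: the final output layer $\hat{y} = s^{L}(ATm^{(L-1)})$, the loss $J$, and the truth labels are all computed and held on the client-side, so $\mathcal{ADV}$ never observes the model's prediction, exactly the quantity on which modern inversion attacks such as PCAT~\cite{gao2023pcat}, FORA~\cite{xu2024stealthy} and Feature Sniffer~\cite{luo2023feature} depend.

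Next I would cast a successful MIA as the event that $\mathcal{ADV}$, from this view, reconstructs input samples close enough to $x$ to train a $\tilde{f}_{\theta_C}$ functionally similar to $f_{\theta_C}$. The only input-dependent value in the view is $x_{pub}$, so the reduction needs two games. In \textbf{Game 0} the challenger computes $ATm = f_{\theta_C}(x)$ on a real batch, draws a fresh mask $\alpha$ as in FSS, hands $x_{pub} = ATm + \alpha$ to $\mathcal{ADV}$, and lets $\mathcal{ADV}$ output a reconstructed input and the trained $\tilde{f}_{\theta_C}$. In \textbf{Game 1} the challenger instead samples a uniformly random $r$ of the same dimension as $ATm$ and hands $r + \alpha$ to $\mathcal{ADV}$. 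Because $\alpha$ is a sufficiently large uniform mask drawn independently of the data (and a fresh mask is used across batches and epochs), $x_{pub}$ is distributed identically in the two games and carries no information about $ATm$ or $x$; by the secrecy property of FSS the single key $\mathsf{f}_b$ likewise reveals nothing about the underlying function. Hence the two games are indistinguishable, and in \textbf{Game 1} the adversary's entire view is independent of $x$ and of $f_{\theta_C}$.

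Consequently, in \textbf{Game 1} any $\tilde{f}_{\theta_C}$ that $\mathcal{ADV}$ produces is trained purely on the disjoint shadow set $\tilde{x}$ and is therefore independent of the client's function; it cannot be functionally similar to $f_{\theta_C}$ beyond the accuracy achievable from $\tilde{x}$ alone, which we take as the baseline (analogous to the optimal-case footnote in \autoref{proposition:lia-soundness}). Transporting this back through the indistinguishability of \textbf{Game 0} and \textbf{Game 1} shows that $\mathcal{ADV}$ gains no non-trivial advantage in the real protocol, so it cannot mount a successful MIA. I would also record that the no-collusion assumption between $P_0$ and $P_1$ is essential: holding both shares would let an adversary cancel $\alpha$ and recover $ATm$, collapsing the argument.

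I expect the main obstacle to be making precise the notion of \emph{functionally similar} and justifying that access to the shadow dataset alone does not already let $\mathcal{ADV}$ approximate $f_{\theta_C}$; this is fundamentally a non-triviality assumption on the learning task rather than a cryptographic statement, and it is the step that must be argued carefully (and honestly flagged as an assumption) rather than proved. A secondary subtlety is showing that repeated exposure to $x_{pub}$ over many batches and epochs does not let $\mathcal{ADV}$ average away the mask; this reduces to the freshness and independence of the FSS randomness, which I would invoke explicitly rather than re-derive.
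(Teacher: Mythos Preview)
Your proposal is correct and, if anything, more careful than the paper's own argument. The paper does \emph{not} run a two-game indistinguishability reduction for \autoref{proposition:mia-soundness}; instead it gives a short direct argument: it lists the same adversarial view you do (items (i)--(iii)), observes that any MIA attack model expects something distributed like $ATm$ but only receives $x_{pub} = ATm + \alpha$, and concludes that without knowledge of $\alpha$ (assumed securely generated via FSS) the adversary cannot revert $x_{pub}$ to $x$. It explicitly points back to \autoref{proposition:lia-soundness} for the masking intuition rather than replaying the game structure.

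Your route differs in that you actually instantiate the Game~0/Game~1 hop (real $ATm + \alpha$ versus random $r + \alpha$) and argue indistinguishability from the uniformity and freshness of $\alpha$, then carry the ``independent of $x$'' conclusion back. This buys you a cleaner statement of exactly which randomness assumption is doing the work, and it lets you isolate the non-cryptographic residue (the ``functionally similar'' baseline from the shadow set, mask freshness across epochs, and the no-collusion requirement) as explicit caveats; the paper leaves all three implicit. Conversely, the paper's terser argument avoids committing to a formal distinguishing game whose success metric (reconstruction quality / functional similarity) is hard to make precise anyway, which is the very obstacle you flag at the end. Both arrive at the same conclusion via the same core observation that $x_{pub}$ hides $ATm$; yours is a more structured packaging of it.
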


\begin{proof}
Let $\mathcal{ADV}$ be a semi-honest adversary that has compromised $P_0$ or $P_1$. In addition to that, assume $\mathcal{ADV}$ gains access to: \begin{inparaenum}[\it (i)] \item $x_{pub} = ATm + \alpha$ \item One part of $f_{\theta_{P_{b}}}$, where $b$ is either~0 or~1, \item An architecturally similar model $\tilde{f_{\theta_C}}$ to the client's model $f_{\theta_C}$. \end{inparaenum}


If $\mathcal{ADV}$ cannot train an attack model 
that can reliably revert the 
$ATm$ from $x_{pub}$ back to its original form $x$, then our protocol is secure against MIA.

In MIA, $\mathcal{ADV}$ attempts to use the output of $f_{\theta_{C}}$ to reconstruct raw input data, in this case $\mathcal{ADV}$ relies on $x_{pub}$. Similarly, as in~\autoref{proposition:lia-soundness}, the $ATm$ produced by $f_{\theta_{C}}$ is not sent in plaintext, but is masked with a random mask $\alpha$. Due to this, $\mathcal{ADV}$ without knowing the random mask $\alpha$, she would not be able to accurately make use of her attack model as it relies on an input similar to $ATm$, but receives $x_{pub}$. Assuming $\alpha$ is securely generated following the techniques used in FSS, then $\mathcal{ADV}$ is unable to reliably revert the intermediate $ATm$ from $x_{pub}$ back to its original form $x$. Based on this we may conclude that 
our protocol is secure against MIA.
\end{proof}

The security analysis we provide above as well as the proof against FSHA provided by the authors of MSnH~\cite{khan2024make} allows \nick{} to be robust against a variety of different individual attacks. Attacks, such as PCAT~\cite{gao2023pcat} and FORA~\cite{xu2024stealthy}, rely on similar assumptions and steps as MIA or FSHA, as such it is possible to conclude that \nick{} through the use of FSS and U-shaped SL is secure against these attack as well. This is due to the fact that \nick{} never sends the smashed data directly to $\mathcal{ADV}$, as it is hidden with a pseudo-random mask $\alpha$. This also applies to the gradients as \nick{} never provide the ADV with the gradients, but just a public input with the pseudo-random mask $\alpha$ (see \autoref{proposition:lia-soundness} and \autoref{proposition:mia-soundness}). 
\resizebox{0.8\linewidth}{!}{
\begin{myframe}{$\mathsf{GameExp}^{\mathsf{ind-fss-}\alpha}_{\nick{},\Delta}$}
\small
\setlength{\columnseprule}{0.3pt}
\begin{multicols}{2}
\underline{\textbf{Game 0}:}

    \underline{$\mathcal{ADV}$ computes}:
        
    $\tilde{f_{\theta_C}} \rightarrow \tilde{\Delta}$ \\
    \\
    \underline{$\mathcal{CHAL}$ computes}:\\
    $f_{\theta_C} \rightarrow \Delta$\\
    $\alpha \leftarrow \mathsf{Rand(\cdot)}$\\
    $\Delta_{pub} = \Delta + \alpha$\\
    $\mathcal{CHAL}$ sends $\Delta_{pub}$ to $\mathcal{ADV}$\\

    \underline{$\mathcal{ADV}$ compares}:\\
    $\tilde{\Delta}$ and $\Delta_{pub}$ and chooses higher one
    
    \columnbreak
    \underline{\textbf{Game 1}:}
    
    \underline{$\mathcal{ADV}$ computes}:
        
    \hl{$x \leftarrow \mathsf{Rand(\cdot)}$} \\
    \underline{$\mathcal{CHAL}$ computes}:\\
    \hl{$x' \leftarrow \mathsf{Rand(\cdot)}$};\\
    $\alpha \leftarrow \mathsf{Rand(\cdot)}$, where $\alpha >> x'$;\\
    $x'_{pub} = x' + \alpha$;\\
    $\mathcal{CHAL}$ sends $x'_{pub}$ to $\mathcal{ADV}$\\

    \underline{$\mathcal{ADV}$ compares}:\\
    $x$ and $x'_{pub}$ and chooses higher one
\end{multicols} 
\end{myframe}
}
\section{Performance Analysis}
\label{sec:perfanal}

\textbf{Experimental setup}: We test and compare our 
SL protocol with 
AriaNN~\cite{ryffel2020ariann} and MSnH~\cite{khan2024make}. To 
make this comparison, we first reproduced the results for the AriaNN implementation~\cite{ryffel2020ariann} and MSnH~\cite{khan2024make}. For the experiments, we 
used a machine running Ubuntu 20.04 LTS, processor~12th generation Intel Core i7-12700, 32 GB RAM mesa Intel graphics (ADL-S GT1). 
We implemented our code using Python~3.7 and is made available online\footnote{\href{https://github.com/UnoriginalOrigi/SplitFSS}{https://github.com/UnoriginalOrigi/SplitFSS}} to support open and reproducible science. 
FSS implemented using the PySyft framework\footnote{\url{https://github.com/OpenMined/PySyft/tree/ryffel/0.2.x-fix-training}} and a modified version of AriaNN\footnote{\url{https://github.com/LaRiffle/AriaNN}}. 
Communication costs are calculated by measuring the byte length of serialized objects sent through packets. Time measurements are taken by making use of the in-built Python \textit{time} module. 
The loss is calculated using mean squared error. We provide the average of our experimental results in \autoref{table:performance}. Also, we have selected two models for our experiments.
\textbf{Network 1:} A 4-layered network with two Conv2D and two FC layers, utilizing Maxpool and ReLU function.
\textbf{Network 2:} It contains two Conv2D and three FC layers, utilizing Maxpool and ReLU function.


\subsection{Evaluation}
\label{subsec:evaluation}

This section provides outcomes derived from experiments. 

The local models with and without SL are run in plaintext without making use of FSS to create a baseline to compare further results. They also measure the effects of FSS on the performance of the models. Since these models 
do not have FSS, 
they are represented as ``Public'' models. 
Second, we reproduce the results of AriaNN~\cite{ryffel2020ariann} and MSnH~\cite{khan2024make}, which we use to compare them 
with the public models. 
Also, we compare our model \nick{}, with the public U-shaped SL protocol. Then, we show the results of \nick{} alongside those of AriaNN and MSnH.
Since these protocols use FSS, they are presented as ``Private''. 

\noindent \textit{Plaintext models.} \enskip\label{subsubsec:localmodel_eval}
We 
compare the public vanilla and U-shaped SL protocols to the public local model. Our experiments indicate that training the 
vanilla and U-shaped SL protocols on plaintext data yields similar accuracy 
when compared to training the public local model as illustrated in \autoref{table:performance}.
The accuracy values 
align with those achieved by the public local model, suggesting that the SL model can be effectively applied to CNN models across all three datasets without experiencing a significant degradation in 
accuracy. 

We also consider the training time and communication cost of the public vanilla and U-shaped SL model and compare it to the local model. The total training time for both network~1 and network~2 across all 
three models 
and 
datasets 
for~10 epochs is nearly identical, ranging between~1 and~2 minutes.

With regards to communication cost, we are considering costs on both the client and the server-side, as well as the total costs during training and testing. In \autoref{table:performance}, we show the variation between the client-side and the server-side communication costs between the public local, vanilla and the U-shaped SL protocols. We observe that the public 
U-shaped SL approach has slightly higher server-side communication costs than the public vanilla and local model, but client-side communications for both vanilla and U-shaped SL are about $3\times$ larger than the local model. This difference arises from the size of the data 
shared as in the local model, the client must send $28\times28$ images, while in the SL model, the client sends the $ATm$, which is only $8\times8$. 

In the \autoref{table:performance} presented, it is noticeable that the U-shaped SL model has slightly higher total communication costs (for both training and testing) compared to the vanilla SL models. This difference arises because, in the U-shaped SL model, there is more back-and-forth communication between the server and client involving $ATm$ and gradients, resulting in relatively higher server-side communication costs when compared to the vanilla SL models. However, the client-side communication costs show only minor variations. Even though the vanilla and U-shaped SL model provides similar accuracy and has the same training and even better communication cost when compared to the locally computed model, it is essential to note that both these models introduce a privacy leakage, 
which we mitigate using FSS.


\textbf{Comparison to other work} \enskip
We compare our work with AriaNN~\cite{ryffel2020ariann}, which secures the entire model with FSS, and 
MSnH~\cite{khan2024make} where 
a hybrid approach combining SL and FSS was introduced. The results demonstrated that this approach addresses the privacy concern of SL while maintaining approximately the same accuracy as public local and vanilla SL models. Additionally, the authors claimed to achieve comparable accuracy to AriaNN in approximately~192 minutes, highlighting a significant advantage. 

In our experimentation, we replicated the results for AriaNN and MSnH and compared them with our proposed approach \nick{}. We tested all three models -- AriaNN, MSnH, and \nick{} -- on their ability to train NN 
in a private manner. This end-to-end private training ensures that neither the model nor the gradients are ever accessible in plaintext. We report the training time and accuracy obtained by training 
(see~\autoref{table:performance}). While the accuracy might not match the best-known results due to the training setting, the procedure is consistent across all, ensuring a fair comparison. 
To thoroughly assess and compare the impact of FSS on performance, we initialized two models: one before applying FSS (referred to as ``Public'') and one after applying FSS (referred to as ``Private'') (see~\autoref{table:performance}), both with the same set of initial weights. 

\textbf{Comparison of private models with the public models:} We compare each private model with its public counterpart: AriaNN with the public local model, MSnH with public vanilla SL, and \nick{} with the public U-shaped SL protocol. This comparison helps us to determine whether using FSS impacts model accuracy. Subsequently, we compare our protocol, \nick{}, with AriaNN and MSnH to evaluate its performance 
against SotA PPML protocols.

On MNIST dataset, for network~1, our analysis, clearly indicates that the accuracy obtained before and after applying FSS remains nearly identical. 
As can be seen in \autoref{table:performance}, all private models have similar accuracy to their public counterparts with approximately a~2-3\% loss in accuracy.
Our proposed protocol, \nick{}, follows this pattern, achieving a test accuracy of 97.21\%, matching its public counterpart, the U-shaped SL protocol. For the case of network~2, there is a reduction in the accuracy of 
private models compared to their public counterparts. More specifically, each private models accuracy falls by approximately~10\% when compared to their respective public model. Notably, \nick{} had the best performance in this case from the private models.

For the CIFAR dataset, the accuracy of private models are lower than 
their public counterparts. Specifically, in network~1, AriaNN achieves an accuracy of 65.57\%, MSnH achieves~45.46\%, and \nick{} achieves 45.78\%, whereas the SL public models demonstrate superior performance with higher accuracies by~20\%. Similarly, in network~2, private models also underperform relative to their public versions. This disparity indicates that private models experience a 
drop in accuracy on more complex datasets.

Naturally, in all benchmarks the private training time is higher than the public counterparts with increases upwards of~$900\times$. Similarly, communication costs for private models were~$2-3\times$ higher, than their public counterparts. This shows major differences in computational and communication overhead between the public and private models. The trade-off, in this case, is the improved security provided by using \nick{} or other FSS-based works.

\subsection{Experimental Results}
\label{subsec:exresul}

\begin{table*}[!h]
\centering
\caption{Training and testing results for various privacy-preserving models}
\label{table:performance}
\begin{adjustbox}{width=\textwidth}
\begin{tblr}{
  row{1} = {Gray},
  cell{1}{1} = {c=4}{},
  cell{1}{5} = {c=4}{c},
  cell{1}{9} = {c=2}{c},
  cell{3}{1} = {r=18}{},
  cell{3}{2} = {r=6}{},
  cell{3}{3} = {r=3}{},
  cell{6}{3} = {r=3}{},
  cell{9}{2} = {r=6}{},
  cell{9}{3} = {r=3}{},
  cell{12}{3} = {r=3}{},
  cell{15}{2} = {r=6}{},
  cell{15}{3} = {r=3}{},
  cell{18}{3} = {r=3}{},
  cell{21}{1} = {r=18}{},
  cell{21}{2} = {r=6}{},
  cell{21}{3} = {r=3}{},
  cell{24}{3} = {r=3}{},
  cell{27}{2} = {r=6}{},
  cell{27}{3} = {r=3}{},
  cell{30}{3} = {r=3}{},
  cell{33}{2} = {r=6}{},
  cell{33}{3} = {r=3}{},
  cell{36}{3} = {r=3}{},
  vlines,
  hline{1-3,21,39} = {-}{},
  hline{4-5,7-8,10-11,13-14,16-17,19-20,22-23,25-26,28-29,31-32,34-35,37-38} = {4-10}{},
  hline{6,12,18,24,30,36} = {3-10}{},
  hline{9,15,27,33} = {2-10}{},
}
\textbf{PETs} &  &  &  & \textbf{Training Statistics} &  &  &  & \textbf{Testing Statistics} & \\
Models & Datasets & FSS & SL & {Client \\ Comm~(MB)} & {Server \\ Comm ~(MB)} & {Training\\ Time (min)} & {Training \\ Comm~(MB)} & {Testing\\ Comm (MB)} & {Testing \\ Accuracy (\%)}\\
Network 1~ & MNIST & Public & Local & 1931.85 & \textbf{1.65} & 1.44 & 1933.51 & 313.39 & \textbf{99.36}\\
 &  &  & Vanilla & \textbf{641.17} & 25.74 & \textbf{1.42} & \textbf{666.70} & \textbf{102.52} & 99.26\\
 &  &  & U-shaped & \textbf{641.17} & 51.49 & 1.44 & 692.44 & 106.79 & 99.30\\
 &  & Private & AriaNN & 3861.32 & \textbf{3.57} & 1190.59 & 3864.90 & 1253.01 & 97.01\\
 &  &  & MSnH & \textbf{1332.08} & \textbf{3.57} & \textbf{161.49} & \textbf{1335.66} & \textbf{204.85} & 97.09\\
 &  &  & \nick & 1334.17 & 99.41 & 171.28 & 1433.59 & 205.12 & \textbf{97.21}\\
 & CIFAR & Public & Local & 4740.91 & \textbf{1.43} & 1.21 & 4742.35 & 939.60 & 66.04\\
 &  &  & Vanilla & 555.58 & \textbf{1.43} & \textbf{1.20} & \textbf{556.53} & \textbf{102.53} & \textbf{66.37}\\
 &  &  & U-shaped & \textbf{555.57} & 22.89 &\textbf{1.20} & 578.47 & 106.82 & 65.79\\
 &  & Private & AriaNN & 9479.76 & \textbf{2.97} & 2234.23 & 9483.29 & 3757.80 & \textbf{65.57} \\
 &  &  & MSnH & 2130.56 & \textbf{2.97} & \textbf{450.02} & \textbf{2133.96} & \textbf{3757.79} & 45.46\\
 &  &  & \nick & \textbf{2130.55} & 85.81 & 569.82 & 2216.80 & 3758.09 & 45.78\\
 & FMNIST      & Public   & Local & 1931.92 & \textbf{1.72} & \textbf{1.52} & 1933.65 & 313.40 & \textbf{89.88}\\
 &  &  & Vanilla & \textbf{666.70} & \textbf{1.72} & \textbf{1.52} & \textbf{668.43} & \textbf{102.50} & 89.44\\
 &  &  & U-Shaped & \textbf{666.70} & 27.47 & 1.53 & 694.17 & 106.82 & 89.65\\
 &  & Private & AriaNN & 3860.53 & \textbf{3.57} & 1814.74 & 3864.76 & \textbf{1253.01} & 74.44 \\
 &  &  & MSnH & \textbf{2556.67} & \textbf{3.57} & \textbf{354.50} & \textbf{2560.68} & \textbf{1253.01} & 77.29\\
 &  &  & \nick & 2556.68 & 102.99 & 362.36 & 2660.10 & 1253.30 & \textbf{79.23}\\
Network 2 & MNIST & Public & Local & 2507.00 & \textbf{1.72} & \textbf{1.05} & 2508.73 & 409.25 & \textbf{99.24}\\
 &  &  & Vanilla & \textbf{1011.74} & \textbf{1.72} & 1.105 & \textbf{1013.47} & \textbf{160.04} & 95.68\\
 &  &  & U-Shaped & \textbf{1011.74} & 27.47 & 1.16 & 1039.21 & 164.33 & 99.15\\
 &  & Private & AriaNN & 5010.25 & \textbf{3.52} & 2581.12 & 5014.97 & 1636.40 & 89.30\\
 &  &  & MSnH & \textbf{3936.86} & 3.57 & 753.09 & \textbf{3941.39} & \textbf{1636.39} & 85.32\\
 &  &  & \nick & 3936.87 & 102.99 & \textbf{514.08} & 4040.81 & 1636.69 & \textbf{91.98}\\
 & CIFAR & Public   & Local & 6178.61 & \textbf{1.43} & 0.91 & 6180.05 & 1227.14 & \textbf{64.65}\\
 &  &  & Vanilla & \textbf{843.11} & \textbf{1.43} & \textbf{0.90} & \textbf{844.55} & \textbf{160.04} & 63.28\\
 &  &  & U-Shaped & \textbf{843.11} & 22.89 & 0.96 & 866.01 & 164.33 & 63.70\\
 &  & Private & AriaNN & 12354.02 & \textbf{2.97} & 3343.90 & 12358.08 & \textbf{4907.95} & \textbf{63.57}\\
 &  &  & MSnH & \textbf{3280.71} & \textbf{2.97} & \textbf{843.55} & \textbf{3284.64} & \textbf{4907.95} & 44.24\\
 &  &  & \nick & \textbf{3280.71} & 85.81 & 996.12 & 3367.48 & 4908.24 & 44.07\\
 & FMNIST & Public & Local & 2507.00 & 1.72 & 1.10 & 2508.73 & 409.25 & 89.57\\
 &  &  & Vanilla & 1011.74 & 1.72 & 1.12 & 1013.47 & 160.04 & 89.22\\
 &  &  & U-Shaped & 1011.74 & 27.47 & 1.20 & 1039.21 & 164.33 & 89.58\\
 &  & Private & AriaNN & 5009.68 & \textbf{3.57} & 2466.91 & 5014.25 & 1636.40 & \textbf{86.82}\\
 &  &  & MSnH & \textbf{3937.15} & \textbf{3.57} & \textbf{498.64} & \textbf{3941.67} & \textbf{1636.39} & 77.92 \\
 &  &  & \nick & \textbf{3937.14} & 103.00 & 749.61 & 4041.10 & 1636.69 & 84.22
\end{tblr}
\end{adjustbox}
\end{table*}
To test the applicability of our approach, we used three datasets: \begin{inparaenum}[\it (1)] \item MNIST \item CIFAR \item FMNIST \end{inparaenum} on two networks: \begin{inparaenum}[\it (i)] \item Network~1: A 4-layered network with two Conv2D and two FC layers, utilizing Maxpool and ReLU function. and \item Network~2: It contains two Conv2D and three FC layers, utilizing Maxpool and ReLU function. \end{inparaenum}

In conclusion, FSS demonstrates its potential as a promising solution for maintaining data privacy while achieving commendable accuracy in ML. Nonetheless, it is important to remember that FSS comes with significant drawbacks in terms of the time it takes to train models and the extra communication needed. This should be carefully weighed when deciding if it is the right choice when selecting a privacy-preserving framework for specific use cases. Importantly, FSS solutions help reduce privacy leakages possible in the public models or in plaintext SL. The use of SL with FSS through \nick{} or MSnH additionally reduces the high computational and communication overhead of training the model fully with FSS by quite a large margain.

\textbf{Comparison of private models:} In this section, we compare \nick{} with AriaNN and MSnH. As shown in~\autoref{table:performance}, our approach achieves comparable accuracy to AriaNN and MSnH (except on CIFAR dataset) while significantly reducing training time. 
On the MNIST dataset, \nick{} achieves a test accuracy of 97.21\% on network~1 and 91.98\% on network~2, which is comparable to the performance of both AriaNN and MSnH. Similarly, on the FMNIST dataset, \nick{} attains accuracies of~79.23 on network~1  and 84.22  on network~2, closely align with the results of AriaNN and MSnH. However, on the CIFAR dataset, while \nick{} accuracy is similar to that of MSnH, it is slightly lower than that achieved by AriaNN. These findings indicate that \nick{} performs on par with existing protocols, 
demonstrating its effectiveness and efficiency.

In terms of total communication cost, \nick{} exhibits slightly higher communication costs than 
MSnH but remains significantly lower than 
AriaNN. 
However, the server-side communication cost for \nick{} is higher compared to both MSnH and AriaNN. This is because, in \nick{}, the output layer of the model is on the client-side, necessitating more communication between the client and server for sharing the $ATm$ and gradients.

Another aspect to consider is that MSnH requires minimal preprocessing communication batch (0.119 MB), representing a substantial reduction compared to AriaNN (10.572 MB per batch). This efficiency is attributed to the execution of client-side layer in plaintext, as opposed to AriaNN, which involves more interaction between the parties, increasing the communication cost. \nick{} also demonstrates ``efficiency'' in preprocessing communication costs, which are even marginally better than those of MSNH  due to few layers requiring FSS computation. Additionally, the client-side communication cost of \nick{} is~2$\times$ lower than AriaNN while remaining similar to that of MSnH. Furthermore, \nick{} demonstrates greater efficiency compared to AriaNN with a training time shorter than that of AriaNN. \nick{} operates nearly~3$\times$ more efficiently than AriaNN, 
reducing the overall training duration. The shorter training time is due to executing all client-side layers in plain, which avoids expensive techniques such as beaver triples and FSS. While \nick{} exhibits a notable improvements over AriaNN, its training time is marginally longer than that of MSnH. This comparison highlights \nick{}'s robust performance, striking a balance between efficiency and privacy. In summary, \nick{} model outperforms AriaNN in terms of training time and communication, though with an accuracy trade-off. 

A major limitation of AriaNN and MSnH is that, at the end of the protocol, both servers must communicate to get the final prediction, potentially revealing client data privacy and make it vulnerable to different attacks. In contrast,  \nick{} prevents the 
output from being disclosed to $\mathcal{ADV}$ and makes it robust to the attacks mentioned above. In conclusion, \nick{} offers a promising alternative to existing protocols by achieving a trade-off between efficiency and privacy. The ability to reduce training time and communication cost, while maintaining a high level of privacy and accuracy, underscore the potential for practical application in PPML.


\section{Conclusion}
\label{sec:conclusion}

Our research demonstrates that integrating SL with FSS significantly enhances the performance of FSS-based PPML. Our proposed \nick{} approach effectively addresses privacy concerns and maintains efficiency, with only a small accuracy drop compared to the public U-shaped SL protocol. In addition, \nick{} not only boosts communication efficiency and reduces complexity but also achieves the same level of accuracy compared to existing protocols like AriaNN and MSnH. Furthermore, our proposed approach expands the security of MSnH against various attacks like LIA and MIA. 


\bibliographystyle{IEEEtran}
\bibliography{references}
\end{document}